\renewcommand{\vec}[1]{\bm{#1}}
\newcommand{\scrE}{\ensuremath{\mathcal{E}}}
\newcommand{\scrX}{\ensuremath{\mathcal{X}}}
\newcommand{\scrY}{\ensuremath{\mathcal{Y}}}
\newcommand{\del}{\ensuremath{\mathsf{del}}}
\newcommand{\ins}{\ensuremath{\mathsf{ins}}}
\newcommand{\sub}{\ensuremath{\mathsf{sub}}}
\newcommand{\mask}{\ensuremath{\mathsf{mask}}}
\newcommand{\mdel}{\ensuremath{\epsilon}}
\newcommand{\medit}{\ensuremath{\vec{\epsilon}}}
\newcommand{\pert}[1]{\bar{#1}}
\newcommand{\radius}{r}
\newcommand{\smooth}[1]{#1}
\newcommand{\base}[1]{{#1}_{\mathrm{b}}}
\newcommand{\reals}{\mathbb{R}}
\newcommand{\ind}[1]{\mathbf{1}_{#1}}
\newcommand{\tokenizer}{\mathsf{t}}
\newcommand{\opset}{\mathsf{o}}
\newtheorem{theorem}{Theorem}
\newtheorem{lemma}[theorem]{Lemma}
\newtheorem{proposition}[theorem]{Proposition}
\theoremstyle{definition}
\newtheorem{definition}{Definition}
\theoremstyle{remark}
\DeclareMathOperator{\dist}{dist}
\DeclareMathOperator{\apply}{apply}
\DeclareMathOperator{\bernoulli}{Bernoulli}
\DeclareMathOperator{\E}{\mathbb{E}}
\DeclareMathOperator*{\argmax}{arg\,max}
\DeclarePairedDelimiter\abs{\lvert}{\rvert}%
\DeclarePairedDelimiter\floor{\lfloor}{\rfloor}
\newcommand{\agnews}{AG-News}
\newcommand{\imdb}{IMDB}
\newcommand{\lun}{LUN}
\newcommand{\satnews}{SatNews}
\newcommand{\assassin}{Spam-assassin}
\newcommand{\ns}{Baseline}
\newcommand{\ranmask}{RanMASK}
\newcommand{\textcrs}{TextCRS}
\newcommand{\randel}{CERT-ED}%
\newcommand{\clare}{Clare}
\newcommand{\bae}{BAE-I}
\newcommand{\bertattack}{BERT-Attack}
\newcommand{\deepwordbug}{DeepWordBug}
\newcommand{\textfooler}{TextFooler}
\newcommand\theDC{}
\newcommand\boldcell{\relax\ifmmode$\egroup\fi\bfseries\boldmath\theDC}
\newcolumntype{E}[3]{>{\def\theDC{\DC@{#1}{#2}{#3}}\theDC}c<{\DC@end}}
\newcolumntype{d}[1]{E{.}{.}{#1}}
\newcolumntype{L}[1]{>{\raggedright\let\newline\\\arraybackslash\hspace{0pt}}m{#1}}
\newcolumntype{C}[1]{>{\centering\let\newline\\\arraybackslash\hspace{0pt}}m{#1}}
\title{CERT-ED: Certifiably Robust Text Classification for Edit Distance}
\author{Zhuoqun Huang, Neil G.\ Marchant, Olga Ohrimenko, Benjamin I.\ P.\ Rubinstein \\
  School of Computing and Information Systems \\
  University of Melbourne \\
  Melbourne, Australia \\
  \texttt{\{zhuoqun, nmarchant, oohrimenko, brubinstein\}@unimelb.edu.au}}
\begin{document}
\maketitle

\begin{abstract}
  With the growing integration of AI in daily life,
  ensuring the robustness of systems to inference-time attacks is crucial.
  Among the approaches for certifying robustness to such adversarial examples,
  randomized smoothing has emerged as highly promising due to its nature as a wrapper around arbitrary black-box models.
  Previous work on randomized smoothing in natural language processing has primarily focused
  on specific subsets of edit distance operations, such as synonym substitution or word insertion,
  without exploring the certification of all edit operations.
  In this paper, we adapt \emph{Randomized Deletion} \citep{huang2023rsdel} and propose, 
  CERTified Edit Distance defense (\randel) for natural language classification. 
  Through comprehensive experiments, we demonstrate that \randel\ outperforms the existing Hamming distance
  method \ranmask\ \citep{zeng-etal-2023-certified} in 4 out of 5 datasets in terms
  of both accuracy and the cardinality of the certificate.
  By covering various threat models, including 5 direct and 5 transfer attacks,
  our method improves empirical robustness in 38 out of 50 settings.
\end{abstract}

\section{Introduction} \label{sec:intro}
\begin{figure*}[t]
    \centering
    \includegraphics[width=0.9\textwidth]{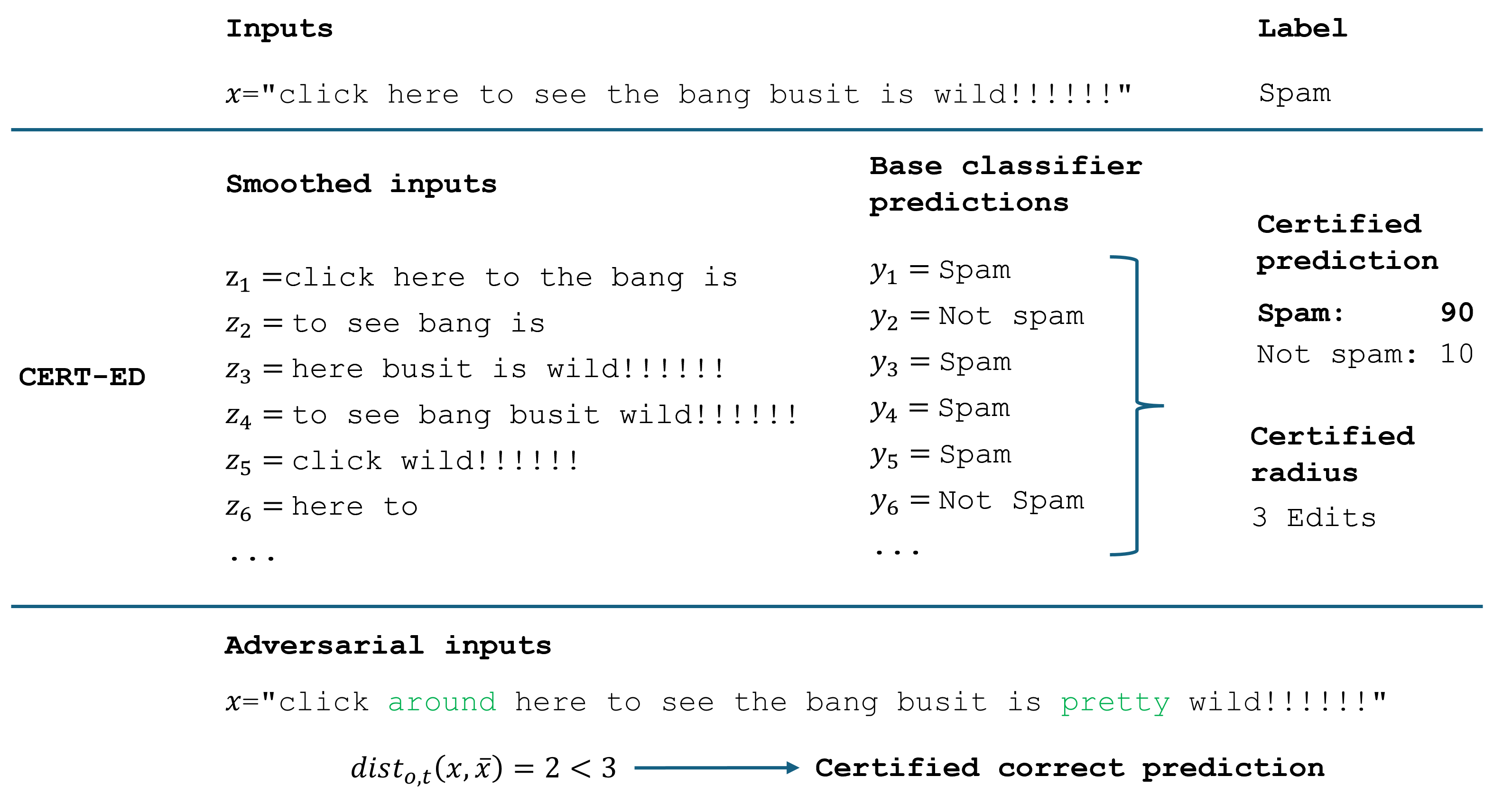}
    \caption{
      Top: Clean sample from \assassin\ dataset.
      Middle: \randel\ applied to the perturbed input to produce edit distance certified prediction of ``Spam'' and certified radius of $3$.
      Bottom: Real adversarial sample generated by, \clare\ \citep{li-etal-2021-contextualized}, against a model without \randel.
      The green words are adversarially inserted words.
      \randel\ is certifiably robust to this adversarial example as the edit distance between the clean and adversarial inputs is $2$,
      less than the certified radius.
    }
    \label{fig:sample-smoothing}
\end{figure*}

Deep nets, transformers, and other modern machine learning approaches have recently achieved significant performance on many 
natural language processing (NLP) tasks thanks to their ability to generalize to complex and unseen data.
However, the well-documented vulnerability of these models to evasion attacks (a.k.a.\ adversarial examples) raises concerns about their use in practice.
For example, numerous previous works have developed attacks that can misguide models by perturbing text at the
word-level~\citep{alzantot-etal-2018-generating,li-etal-2020-bert-attack,ren-etal-2019-generating,zang-etal-2020-word,jin2020bert,
li-etal-2021-contextualized,garg-ramakrishnan-2020-bae}, character-level~\citep{karpukhin-etal-2019-training,gao2018black,ebrahimi-etal-2018-hotflip}
or sentence-level~\citep{iyyer-etal-2018-adversarial,wang-etal-2020-cat,qi-etal-2021-mind,guo-etal-2021-gradient},
while preserving consistent semantics.

Although a wide range of defenses have been proposed against adversarial examples, they are routinely broken by subsequent attacks:
\citet{eger-benz-2020-hero} and \citet{morris2020textattack} showed that while adversarial training typically yields good robustness against a target attack,
it is less robust against unseen attacks.
Consequently, certified robustness has gained considerable interest as a result of competition between attackers and defenders,
where a classifier's prediction can be guaranteed to be invariant to a specified set of adversarial perturbations~\citep{cohen2019certified,wang-etal-2021-certified}.

Certified robustness methods have been well studied for \emph{continuous data that is fixed-dimensional} like images
\citep{wong2018provable,dvijotham2018training, mirman2018differentiable,weng2018towards,lecuyer2019certified,cohen2019certified}.
Among certification approaches, randomized smoothing \citep{lecuyer2019certified,cohen2019certified,levine2020robustness} has attained state-of-the-art performance in many tasks.
Due to the discrete nature of text inputs, however, developing randomized smoothing mechanism for NLP tasks is more challenging.
\citet{ye-etal-2020-safer} and \citet{wang-etal-2021-certified} were the first to investigate randomized smoothing under a synonym substitution threat model.
Similarly, \citet{zeng-etal-2023-certified} proposed \ranmask\ that adapts Randomized Ablation \cite{levine2020robustness} to NLP and is
provably verifiable for Hamming distance under a fixed number of word substitutions. 
However, such robustness certificates fall short in defending against general perturbations as inserting a single word
\citep{garg-ramakrishnan-2020-bae} would void any of these substitution-based certificates.
To remedy this, \citet{huang2023rsdel} proposed a method for producing certifiable predictions under edit distance perturbations, called Randomized Deletion.
However, their method is limited to binary classification tasks, and has only been applied to the malware detection domain. In this paper we address this limitation.

Our contributions are summarized as follows:
\begin{itemize}[leftmargin=*]
    \item We propose and implement CERTified Edit Distance defense (\randel), a multi-class extension of Randomized Deletion \citep{huang2023rsdel}, that can provably certify multi-class predictions for NLP classification tasks\footnote{Our implementation is available at \url{https://github.com/Dovermore/nlp-smoothing-software}.}.
    It smooths input text by adding deletion noise to produce predictions that are certifiably robust under arbitrary attacks within a computed edit distance radius $\radius$ (Figure~\ref{fig:sample-smoothing}).
    \item To compare our edit distance certificates with Hamming distance certificates used in previous work, we define certified cardinality, a discrete analogue of certified volume that has been used to compare certificates with different geometries in the vision domain.
    We evaluate \randel\ using 5 datasets and find significant improvement of over the \ranmask\ baseline, both in certified accuracy and certified cardinality.
    \item We conduct a comprehensive empirical evaluation of robustness against five state-of-the-art direct attacks and transfer attacks.
    Our results show improved robust accuracy in 20 out of 25 settings for direct attacks and 18 out of 25 settings for transfer attacks.
\end{itemize}
\section{Edit distance robustness} \label{sec:prelim}

We consider text sequence classification tasks, where a model~$f$ predicts the class $y \in \scrY$ of input text 
$\vec{x} \in \scrX$. 
For example, in fake news detection the input text is a news article and the possible classes are ``fake'' and 
``real''~\citep{rashkin-etal-2017-truth}. 
We are interested in studying robustness under an adversary that can make a bounded number of edits to the text. 

We define an \emph{edit} to be an operation that deletes (\del), inserts (\ins) or substitutes (\sub) a single 
token in the text. 
A \emph{token} is a contiguous chunk of characters---e.g., a word or sub-word.
The mapping from text to tokens is determined by the adversary's tokenizer $\tokenizer$. 
While our method is compatible with any choice of $\tokenizer$, we set $\tokenizer$ to be a whitespace tokenizer in 
our experiments for comparison with prior work on 
attacks~\citep{garg-ramakrishnan-2020-bae,jin2020bert,li-etal-2020-bert-attack,li-etal-2021-contextualized} 
and robustness~\citep{zeng-etal-2023-certified,zhang-etal-2024-random} at the word-level. 
We note that $\tokenizer$ is solely used to model the adversary's edits, and is distinct from any tokenizer that may 
appear in model $f$ itself. 

For generality, we consider adversaries whose edit operations are constrained to the set  
$\opset \subseteq \{\del, \ins, \sub\}$. 
For instance, $\opset = \{\ins, \sub\}$ for an adversary that cannot perform deletions.
Given the adversary's tokenizer $\tokenizer$ and allowed edit operations $\opset$, we measure the extent of the 
adversary's perturbation using edit distance $\dist_{\opset, \tokenizer}(\pert{\vec{x}}, \vec{x})$, which counts the 
minimum number of edits required to transform original text $\pert{\vec{x}}$ into perturbed text $\vec{x}$. 

Our objective is to design text sequence classification models that are certifiably robust under this threat model. 
Formally, given input text $\vec{x} \in \scrX$ to model $f$, we would like to guarantee that $f$'s prediction is 
unchanged even if $\vec{x}$ was modified by an adversary that made up to $\radius$ edits:
\begin{equation}
  \forall \pert{\vec{x}} \in B_{\radius}(\vec{x}; \opset, \tokenizer): f(\vec{x}) = f(\pert{\vec{x}}). 
  \label{eqn:edit-dist-cert}
\end{equation} 
Here 
\begin{equation}
  B_{\radius}(\vec{x}; \opset, \tokenizer) \coloneqq \{ \pert{\vec{x}} \in \scrX : 
    \dist_{\opset, \tokenizer}(\pert{\vec{x}}, \vec{x}) \leq \radius \} \label{eqn:edit-dist-ball}
\end{equation}
is the set of text inputs that can be transformed into $\vec{x}$ via at most $\radius$ edits. 
As is typical for randomized smoothing, we will develop mechanisms that produce a randomized 
radius $\radius$ given input sequence $\vec{x}$, such that with some chosen high probability 
at least $1-\alpha$, this radius is a valid certificate at $\vec{x}$.

\section{Certified robustness via randomized smoothing} \label{sec:method}

Our approach for achieving certified robustness under bounded edit distance perturbations is based on 
randomized smoothing. 
Specifically, we apply the randomized smoothing mechanism of \citet{huang2023rsdel}, which was originally formulated 
for binary classification of generic sequences. 
We review the mechanism in a text classification context in Section~\ref{sec:randel} and
propose, \randel, a certifiably robust extension to the multi-class setting in Section~\ref{sec:randel-cert}.
Our derived certificates cover attacks \bae~\citep{garg-ramakrishnan-2020-bae} and 
\clare~\citep{li-etal-2021-contextualized} not covered by prior work.

\subsection{Randomized deletion smoothing} \label{sec:randel}

Randomized smoothing has emerged as a general purpose method for constructing certifiably robust 
classifiers~\citep{kumari2023trust}. 
Consider a base classifier $\base{f} \colon \scrX \to \scrY$ and a randomized mechanism $\phi \colon \scrX \to P(\scrX)$ 
that generates perturbed inputs. In the following we construct a smoothed classifier $\smooth{f}$ 
that assigns a probabilistic  score to class $y$ given input $\vec{x}$: %
\begin{equation}
  p_y(\vec{x}) = \E_{\vec{z} \sim \phi(\vec{x})} [\ind{\base{f}(\vec{z}) = y}].
  \label{eqn:smooth-score}
\end{equation}
The smoothed classifier's prediction is then the class with the highest score: 
$\smooth{f}(\vec{x}) = \argmax_{y \in \scrY} p_y(\vec{x})$.

\citet{huang2023rsdel} instantiate randomized smoothing with a deletion mechanism for sequences that achieves 
certified edit distance robustness. 
In the text domain, we apply their deletion mechanism at the level of tokens determined by tokenizer $\tokenizer$. 
Specifically, given input text $\vec{x} \in \scrX$ containing $n$ tokens, the deletion mechanism generates $n$ 
indicator variables $\vec{\mdel} = (\mdel_1, \ldots, \mdel_{n})$ where 
$\mdel_i \overset{\mathrm{iid}}{\sim} \bernoulli(p_\del)$. 
The perturbed text is then obtained by deleting any token $i$ for which $\mdel_i = 1$ and keeping  
the remaining tokens (in order).

\paragraph{Practicalities}
Exact computation of the scores in~\eqref{eqn:smooth-score} scales exponentially in the number of tokens~$n$. 
We therefore follow standard practice in randomized smoothing and obtain upper\slash lower confidence bounds on the 
scores using Monte Carlo sampling. 
When constructing a smoothed classifier $\smooth{f}$, we fine-tune the base classifier $\base{f}$ on inputs 
perturbed by the mechanism $\phi$, as this results in better performance. 

\subsection{\randel: Multi-class edit distance certification} \label{sec:randel-cert}

We extend the edit distance certificate of \citet{huang2023rsdel} to the multi-class setting. 
We refer to randomized deletion smoothing with this new certificate as \randel. 
We discuss the setup and assumptions here and present two key results, which demonstrate how a certificate can be 
obtained for a bounded Levenshtein (edit) distance adversary whose edit operations $\opset$ are unconstrained. 
All proofs are provided in Appendix~\ref{sec:proofs}.

We adopt a standard setup for certification of smoothed classifiers. 
Given input text $\vec{x}$, we let $y = \argmax_{c} p_c(\vec{x})$ be the class that achieves the highest score and 
$y' = \argmax_{c \neq y} p_c(\vec{x})$ be the class that achieves the second-highest score. 
We assume that $\mu_y \leq p_y(\vec{x})$ is a lower bound on the highest score and $\mu_{y'} \geq p_{y'}(\vec{x})$ 
is an upper bound on the second highest score. 
Apart from this minimal information, we do not assume any knowledge of the base classifier $\base{f}$.
We begin by obtaining upper and lower bounds on the smoothed classifier's score at a neighboring input 
$\pert{\vec{x}}$.

\begin{theorem}[name=General pairwise certificate,restate=pairwisecert] \label{thm:del-cert-pairwise}
  Consider a pair of text inputs $\vec{x}, \pert{\vec{x}} \in \scrX$. 
  Suppose $\pert{\vec{x}}$ can be transformed into $\vec{x}$ using a minimal number of edit operations by 
  deleting $n_\del$ tokens, inserting $n_\ins$ tokens and substituting $n_\sub$ tokens---i.e., 
  $\dist_{\opset,\tokenizer}(\pert{\vec{x}}, \vec{x}) = n_\sub + n_\ins + n_\del$. 
  Then the smoothed classifier's scores for any class $y \in \scrY$ satisfy
  \begin{align*}
    & p_\del^{n_\del - n_\ins} \left(p_y(\vec{x}) - 1 + p_\del^{n_\sub + n_\ins}\right) 
      \leq p_y(\pert{\vec{x}}) \\
    & \qquad \leq p_\del^{n_\del - n_\ins} p_y(\vec{x}) + 1 - p_\del^{n_\sub + n_\del}.
  \end{align*}
\end{theorem}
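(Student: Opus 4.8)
The plan is to prove both inequalities at once through a single coupling argument built from a minimal edit script aligning $\pert{\vec{x}}$ with $\vec{x}$. First I would fix such a script, which partitions the tokens of $\pert{\vec{x}}$ into $M$ \emph{matched} tokens (appearing unchanged and in the same relative order in $\vec{x}$), the $n_\sub$ \emph{substituted} tokens, and the $n_\del$ \emph{deleted} tokens; symmetrically, the tokens of $\vec{x}$ split into the same $M$ matched tokens, the $n_\sub$ substituted tokens (now carrying their replacement values), and the $n_\ins$ \emph{inserted} tokens. Because the script is minimal, the matched tokens are literally identical and order-preserved across the two sequences, which is the structural fact the whole argument rests on.

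Next I would introduce two events for the deletion mechanism. For $\phi(\pert{\vec{x}})$, let $D$ be the event that every substituted and every deleted token is removed by the mechanism; since the $\mdel_i$ are i.i.d.\ $\bernoulli(p_\del)$, $\Pr[D] = p_\del^{n_\sub + n_\del}$. For $\phi(\vec{x})$, let $D'$ be the event that every substituted and every inserted token is removed, so $\Pr[D'] = p_\del^{n_\sub + n_\ins}$. The key coupling lemma I would establish is that, conditioned on $D$ (respectively $D'$), the surviving tokens are exactly a random subsequence of the $M$ matched tokens, each retained independently with probability $1 - p_\del$; coupling the retention decisions on the matched coordinates then makes the two conditional output distributions coincide. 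Consequently the conditional class scores agree, $\E[\ind{\base{f}(\vec{z}) = y} \mid D] = \E[\ind{\base{f}(\vec{z}) = y} \mid D'] =: q$, with $q \in [0,1]$, where the randomness is drawn from $\phi(\pert{\vec{x}})$ and $\phi(\vec{x})$ respectively.

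With this in hand I would apply the law of total expectation to both scores, writing $p_y(\pert{\vec{x}}) = \Pr[D]\,q + \Pr[D^c]\,q_1$ and $p_y(\vec{x}) = \Pr[D']\,q + \Pr[(D')^c]\,q_2$ with $q_1, q_2 \in [0,1]$. Eliminating $q$ between these two identities yields the theorem. For the upper bound I would discard the $\pert{\vec{x}}$ remainder via $q_1 \leq 1$ and substitute $q \leq p_y(\vec{x})/\Pr[D']$ (from $q_2 \geq 0$), obtaining $p_\del^{n_\del - n_\ins} p_y(\vec{x}) + 1 - p_\del^{n_\sub + n_\del}$; for the lower bound I would use $q_1 \geq 0$ together with $q \geq (p_y(\vec{x}) - 1 + \Pr[D'])/\Pr[D']$ (from $q_2 \leq 1$), obtaining $p_\del^{n_\del - n_\ins}(p_y(\vec{x}) - 1 + p_\del^{n_\sub + n_\ins})$. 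The ratio $\Pr[D]/\Pr[D'] = p_\del^{n_\del - n_\ins}$ supplies the multiplicative factor, while the two differing correction exponents arise asymmetrically: the complement mass $1 - \Pr[D]$ of the $\pert{\vec{x}}$ event drives the upper-bound correction $1 - p_\del^{n_\sub + n_\del}$, whereas the slack $1 - \Pr[D']$ in the bound on $q$ drives the lower-bound correction through the exponent $n_\sub + n_\ins$.

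I expect the main obstacle to be the coupling lemma itself, namely verifying rigorously that conditioning on the removal of all substituted, inserted, and deleted tokens leaves identically distributed subsequences of the matched tokens. This requires the order-preservation guaranteed by the minimal alignment together with a clean argument that the mechanism's decisions on the matched coordinates are independent of, hence unaffected by, conditioning on the decisions at the special coordinates. Everything after the coupling is routine bookkeeping with the two total-expectation identities.
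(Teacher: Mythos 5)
Your proposal is correct and is essentially the paper's own argument recast in probabilistic language: your conditioning events $D$ and $D'$ with the coordinate-wise coupling on the matched tokens are exactly the paper's restriction to deletions extending $\pert{\medit}^\star$ and $\medit^\star$ together with the bijection of Lemma~\ref{lem:equiv-edit-del}, and your trivial bounds $q_1, q_2 \in [0,1]$ reproduce Lemma~\ref{lem:bound-remainder}, with the final elimination of $q$ matching the paper's substitution into \eqref{eqn:relate-score-pair}. The only minor (and harmless, indeed slightly cleaner) difference is that you anchor the correspondence at the matched tokens of a minimal edit script, with $\Pr[D] = p_\del^{n_\sub + n_\del}$ and $\Pr[D'] = p_\del^{n_\sub + n_\ins}$ holding by construction, rather than at a longest common subsequence as the paper does.
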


The above result is not immediately useful on its own. 
However, it can be used to derive edit distance certificates under various constraints on the number of edit operations 
of each type ($n_\del, n_\ins, n_\sub$) the adversary can perform. 
Below we present a Levenshtein distance certificate, which covers an adversary that can perform up to $\radius$  
edits of any type (insertions, deletions or substitutions).

\begin{theorem}[name=Levenshtein distance certificate,restate=levcert] \label{thm:del-cert-lev}
  Consider a text input $\vec{x} \in \scrX$ for which a lower bound on the smoothed classifier's highest score $\mu_y$ 
  and an upper bound on the smoothed classifier's runner-up score $\mu_{y'}$ satisfy $\mu_y \geq \mu_{y'}$. 
  Then the smoothed classifier predicts $y$ for any neighboring text input $\pert{\vec{x}} \in \scrX$ such that 
  $\dist_{\opset,\tokenizer}(\pert{\vec{x}},\vec{x}) \leq \radius$ with $\opset = \{\del, \ins, \sub\}$ and
  $\radius = \floor{\log_{p_\del} \frac{1}{2}(2 + \mu_{y'} - \mu_y)}$. 
  If the upper and lower bounds hold jointly with confidence $1 - \alpha$, then the certificate holds with 
  probability $1 - \alpha$.
\end{theorem}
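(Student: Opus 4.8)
The plan is to reduce the multi-class prediction guarantee to a family of pairwise margin comparisons and then invoke Theorem~\ref{thm:del-cert-pairwise}. To certify $\smooth{f}(\pert{\vec{x}}) = y$ it suffices to show $p_y(\pert{\vec{x}}) \geq p_c(\pert{\vec{x}})$ for every competing class $c \neq y$, so I would bound the two sides in opposite directions: for the top class use the lower bound of Theorem~\ref{thm:del-cert-pairwise} together with $\mu_y \leq p_y(\vec{x})$, and for a competitor $c$ use the upper bound together with $p_c(\vec{x}) \leq p_{y'}(\vec{x}) \leq \mu_{y'}$ (valid since $y'$ attains the runner-up score). Writing $N = \dist_{\opset,\tokenizer}(\pert{\vec{x}},\vec{x}) = n_\del + n_\ins + n_\sub \leq \radius$ for the minimal decomposition supplied by the pairwise theorem, the requirement $\text{lower bound} \geq \text{upper bound}$ rearranges --- after collecting the two $p_\del^{n_\del+n_\sub}$ contributions that arise from $p_\del^{n_\del-n_\ins} p_\del^{n_\sub+n_\ins} = p_\del^{n_\del+n_\sub}$ --- into the single condition
\begin{equation*}
  p_\del^{n_\del-n_\ins}(\mu_y-\mu_{y'}-1) + 2\,p_\del^{n_\del+n_\sub} \geq 1.
\end{equation*}

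The crux is that this must hold uniformly over every admissible split of the budget into deletions, insertions and substitutions with $N \leq \radius$, and the main obstacle is that the two exponents $n_\del - n_\ins$ and $n_\del + n_\sub$ respond in opposite directions as the split varies, so a naive monotonicity argument in each $n_\bullet$ does not immediately isolate the worst case. I would sidestep the calculus with a factorization: substituting $n_\del + n_\sub = N - n_\ins$ rewrites the left-hand side as
\begin{equation*}
  p_\del^{-n_\ins}\left[\, p_\del^{n_\del}(\mu_y-\mu_{y'}-1) + 2\,p_\del^{N} \,\right].
\end{equation*}
Since $0 < p_\del < 1$ and $n_\ins \geq 0$, the prefactor $p_\del^{-n_\ins} \geq 1$, so once the bracket is shown to be at least $1$ the whole expression is too, independently of $n_\ins$.

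It then remains to bound the bracket and plug in the radius. Because the smoothed scores lie in $[0,1]$ we have $\mu_y \leq 1$ and $\mu_{y'} \geq 0$, so $\mu_y - \mu_{y'} - 1 \leq 0$; combined with $p_\del^{n_\del} \leq 1$ this gives $p_\del^{n_\del}(\mu_y-\mu_{y'}-1) \geq \mu_y-\mu_{y'}-1$. By definition $\radius = \floor{\log_{p_\del}\tfrac{1}{2}(2+\mu_{y'}-\mu_y)}$ is the largest integer with $\radius \leq \log_{p_\del}\tfrac{1}{2}(2+\mu_{y'}-\mu_y)$, and since the map $s \mapsto p_\del^{s}$ is decreasing this is exactly $p_\del^{\radius} \geq \tfrac{1}{2}(2+\mu_{y'}-\mu_y)$; with $N \leq \radius$ it follows that $2\,p_\del^{N} \geq 2\,p_\del^{\radius} \geq 2 + \mu_{y'} - \mu_y$. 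Adding the two lower bounds, the bracket is at least $(\mu_y-\mu_{y'}-1) + (2+\mu_{y'}-\mu_y) = 1$, which closes the deterministic argument and in passing confirms that the binding (worst) case is the all-substitution split $n_\sub = \radius$, $n_\del = n_\ins = 0$.

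Finally, the entire derivation is deterministic once the score bounds $\mu_y \leq p_y(\vec{x})$ and $\mu_{y'} \geq p_{y'}(\vec{x})$ are in force; since these two bounds hold jointly with probability $1-\alpha$ by assumption, the resulting certificate is valid with probability $1-\alpha$, as claimed.
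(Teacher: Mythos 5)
Your proof is correct and follows the same skeleton as the paper's: reduce the multi-class claim to the pairwise bounds of Theorem~\ref{thm:del-cert-pairwise} (lower bound for $y$ via $\mu_y$, upper bound for each competitor $c$ via $p_c(\vec{x}) \le p_{y'}(\vec{x}) \le \mu_{y'}$), collapse the margin condition to $p_\del^{n_\del-n_\ins}(\mu_y-\mu_{y'}-1)+2p_\del^{n_\del+n_\sub}\ge 1$, identify the all-substitution split as the binding case, and invert $p_\del^{\radius}\ge\tfrac{1}{2}(2+\mu_{y'}-\mu_y)$ to obtain the floor expression; the confidence transfer at the end is identical. Where you genuinely differ is in how the worst case over the budget splits $(n_\del,n_\ins,n_\sub)$ is established: the paper asserts its objective $\psi$ is monotonically decreasing in $n_\sub$ and jumps to the minimizer $(0,0,\radius)$ --- a claim that by itself does not dispose of the $n_\del$ and $n_\ins$ directions (increasing $n_\ins$ moves the two terms in opposite directions, exactly the obstacle you flag) --- whereas your factorization of the left-hand side as $p_\del^{-n_\ins}\bigl[p_\del^{n_\del}(\mu_y-\mu_{y'}-1)+2p_\del^{N}\bigr]$ with prefactor $p_\del^{-n_\ins}\ge 1$, combined with the two elementary lower bounds on the bracket, verifies the condition uniformly over all admissible splits with no optimization or calculus at all. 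This buys a more complete and self-contained argument that in effect fills in the step the paper leaves terse. The only shared (minor, conventional) looseness is that at the boundary radius both arguments certify a non-strict margin $p_y(\pert{\vec{x}})\ge p_c(\pert{\vec{x}})$, so the stated prediction guarantee implicitly assumes ties in the smoothed argmax are broken in favor of $y$.
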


This result is readily adapted for adversaries that are constrained in the kinds of edit operations they can perform. 
We provide certificates for seven constrained settings in Table~\ref{tbl:del-cert-radii}.

\begin{table}
  \centering
  \begin{tabular}{lllc}
    \toprule
    \multicolumn{3}{c}{Adversary's ops}      & \\
    \cmidrule{1-3}
    \del       & \ins       & \sub       & Certified radius ($\downarrow$) \\
    \midrule 
    \checkmark & \checkmark & \checkmark & $\floor*{\log_{p_\del}\frac{2 + \mu_{y'} - \mu_y}{2}}$ \\\addlinespace[0.2em]
    \checkmark &            & \checkmark & $\floor*{\log_{p_\del}\frac{2 + \mu_{y'} - \mu_y}{2}}$ \\\addlinespace[0.2em]
               & \checkmark & \checkmark & $\floor*{\log_{p_\del}\frac{2 + \mu_{y'} - \mu_y}{2}}$ \\\addlinespace[0.2em]
               &            & \checkmark & $\floor*{\log_{p_\del}\frac{2 + \mu_{y'} - \mu_y}{2}}$ \\\addlinespace[0.2em]
    \checkmark & \checkmark &            & $\floor*{\log_{p_\del}\frac{1}{1 - \mu_{y'} + \mu_y}}$ \\\addlinespace[0.2em]
    \checkmark &            &            & $\floor*{\log_{p_\del}\frac{1}{1 - \mu_{y'} + \mu_y}}$ \\\addlinespace[0.2em]
               & \checkmark &            & $\floor*{\log_{p_\del}(1 + \mu_{y'} - \mu_y)}\vphantom{\frac{1}{1}}$ \\
    \bottomrule
  \end{tabular}
  \caption{Certified radii as a function of the types of edit operations the adversary can perform.}
  \label{tbl:del-cert-radii}
\end{table}

\section{Experiments} \label{sec:eval}
To evaluate the effectiveness of our methods, we train and evaluate models with \randel\ certification across a variety of English datasets and
compare observed performance against RanMASK~\citep{zeng-etal-2023-certified}. %
We show that our method uniformly dominates \ranmask\ in certified radius and accuracy for 4 out of 5 datasets in Section~\ref{sec:eval-cert}.
We then conduct 5 direct and 5 transfer attacks on the certified models and show that \randel\ is also empirically
more robust than \ranmask\ in 38 out of 50 settings in Section~\ref{sec:eval-attack}.
Though \randel\ is empirically more robust than \ranmask\ against word substitution and character-level attacks,
the performance is more mixed for attacks that induce edit distance perturbations.
Finally, we also report a 3~times speedup of our method compared to \ranmask\ in Appendix~\ref{app-sec:eval-efficiency}.

\paragraph{Datasets}
We present results on five diverse datasets listed in Table~\ref{tbl:data-summary}.
All datasets are partitioned into training, validation and test sets.
\agnews~\citep{zhang2015character} and \imdb~\citep{maas-EtAl:2011:ACL-HLT2011} are
standard datasets for topic classification and sentiment analysis respectively, that
have been used for evaluation in prior work~\citep{zeng-etal-2023-certified}.
However, since adversaries may lack incentive to target these models, we follow recommendations of \citet{chen-etal-2022-adversarial},
and consider \assassin~\citep{chen-etal-2022-adversarial} for spam detection, and
\satnews~\citep{yang-etal-2017-satirical} and \lun~\citep{rashkin-etal-2017-truth,chen-etal-2022-adversarial} for unreliable news detection,
which are arguably more attractive targets for attackers\footnote{These datasets might contain offensive or inappropriate languages due to their adversarial nature}.
We collect all data from HuggingFace Datasets\footnote{\url{https://github.com/huggingface/datasets}} and the \texttt{AdvBench}
repository\footnote{\url{https://github.com/thunlp/Advbench}}~\citep{chen-etal-2022-adversarial}.

\begin{table}[h]
  \begin{center}
    \small
    \begin{NiceTabular}{l@{}rrrr}
      \toprule
                &           & \multicolumn{3}{c}{Number of samples}                     \\
      \cmidrule(lr){3-5}
      Dataset   & Avg words & Train                                 & Valid   & Test    \\
      \midrule
      \agnews   & 37.8      & 108\,000                              & 12\,000 & 7\,600  \\
      \midrule
      \imdb     & 231.2     & 22\,500                               & 2\,500  & 25\,000 \\
      \midrule
      \assassin & 228.2     & 2\,152                                & 239     & 2\,378  \\
      \midrule
      \lun      & 269.9     & 13\,416                               & 1\,490  & 6\,454  \\
      \midrule
      \satnews  & 384.8     & 22\,738                               & 2\,526  & 7\,202  \\
      \bottomrule
    \end{NiceTabular}
  \end{center}
  \caption{Summary of datasets.}
  \label{tbl:data-summary}
\end{table}

\paragraph{Models}
We use the Hugging Face Transformers library~\citep{wolf-etal-2020-transformers} to load a pre-trained
RoBERTa model~\citep{liu2019roberta} as a base classifier for \randel\ and as a non-certified baseline.
We include \ranmask~\citep{zeng-etal-2023-certified} as a certified baseline for Hamming distance. This baseline covers a bounded number of word substitutions only.
Since \ranmask\ is also based on randomized smoothing, albeit with a different masking mechanism, we use the same base classifier, training procedure and parameter settings as \randel\ where possible.
We use \emph{perturbation strength} to refer to the deletion rate parameter $p_\del$ of \randel\ and the masking rate parameter $p_\mask$ of \ranmask.
We note that \textcrs\ \citep{zhang2024textcrs} claims to certify against limited word substitution, insertion,
and deletion operations of a single type, however we do not include it as a baseline since the resulting edit distance certificates are vacuous ($\radius=0$) for text greater than
$2$ words in length (see Appendix~\ref{app-sec:textcrs}).

For both \randel\ and \ranmask, we use a white-space tokenizer for smoothing and fine-tune the base RoBERTa model on the training set where inputs are perturbed by the corresponding smoothing mechanism.
We report our training parameter settings in Table~\ref{tbl:model-parameters} of Appendix~\ref{app-sec:parameters}.

\subsection{Certified accuracy and robustness} \label{sec:eval-cert}
\paragraph{Setup}
Our first set of experiments compares the accuracy and certificates generated by \randel\ and \ranmask.
Following prior work \citep{huang2023rsdel,zeng-etal-2023-certified}, we 
use $1000$ Monte Carlo samples for prediction, $4000$ samples for estimating the certified radius and 
a confidence level of $95\%$.
In the unlikely case where the prediction and certificate disagree, we report the prediction, and report a certified radius of zero.
Table~\ref{tbl:cr-statistics-small} presents a summary of the results, where we highlight the best of three perturbation strengths (80\%, 90\% and 95\%) for each dataset.
Results for all three perturbation strengths (80\%, 90\% and 95\%) are provided in Table~\ref{tbl:cr-statistics} of Appendix~\ref{app-sec:cr-statistics}.

\begin{table}[h]
  \begin{center}
    \small
    \begin{NiceTabular}{
        @{\hskip 5.5pt}l@{}r
        r
        r
        r@{\hskip 5.5pt}
      }[colortbl-like]
      \toprule
               &                  & Clean            & Median & Median              \\
      Model    & $p_\del/p_\mask$ & Accuracy         & CR     & $\log$ CC           \\
      \midrule
      \multicolumn{5}{c}{\rowcolor{gray!10} \agnews\ dataset (avg length 37.84)}    \\
      \midrule
      \ns      & ---              & 94.84\%          & ---    & ---                 \\
      \midrule
      \ranmask & 80\%             & \textbf{93.91\%} & 2      & 12.25               \\
      \randel  & 80\%             & 93.33\%          & 2      & \textbf{12.65}      \\
      \midrule
      \multicolumn{5}{c}{\rowcolor{gray!10} \imdb\ dataset (avg length 231.16)}     \\
      \midrule
      \ns      & ---              & 93.47\%          & ---    & ---                 \\
      \midrule
      \ranmask & 90\%             & 86.87\%          & 2      & 14.02               \\
      \randel  & 90\%             & \textbf{88.26\%} & 2      & \textbf{14.49}      \\
      \midrule
      \multicolumn{5}{c}{\rowcolor{gray!10} \assassin\ dataset (avg length 228.16)} \\
      \midrule
      \ns      & ---              & 98.02\%          & ---    & ---                 \\
      \midrule
      \ranmask & 90\%             & 97.65\%          & 6      & 37.49               \\
      \randel  & 90\%             & \textbf{97.81\%} & 6      & \textbf{38.66}      \\
      \midrule
      \multicolumn{5}{c}{\rowcolor{gray!10} \lun\ dataset (avg length 269.93)}      \\
      \midrule
      \ns      & ---              & 99.16\%          & ---    & ---                 \\
      \midrule
      \ranmask & 90\%             & 97.91\%          & 6      & 34.51               \\
      \randel  & 90\%             & \textbf{98.28\%} & 6      & \textbf{37.94}      \\
      \midrule
      \multicolumn{5}{c}{\rowcolor{gray!10} \satnews\ dataset (avg length 384.84)}  \\
      \midrule
      \ns      & ---              & 94.22\%          & ---    & ---                 \\
      \midrule
      \ranmask & 95\%             & 90.10\%          & 7      & 47.09               \\
      \randel  & 95\%             & \textbf{92.07\%} & 8      & \textbf{54.76}      \\
      \bottomrule
    \end{NiceTabular}
  \end{center}
  \caption{
    Key certification results drawn from Table~\ref{tbl:cr-statistics}. 
    All metrics are computed on the entire test set.
    ``Median CR'' is the median certified radius and ``median $\log$ CC'' is the base-10 
    logarithm of the median certified cardinality.
    The certified cardinality is exact for \ranmask, however a lower bound is used for \randel.
    \randel\ outperforms \ranmask\ in terms of certified accuracy for 4 out of 5 datasets and specifically
    excels on datasets with longer average length.
    Highlighted values are the better of the two smoothed mechanisms.
  }
  \label{tbl:cr-statistics-small}
\end{table}

\paragraph{Clean accuracy}
To assess performance in a non-adversarial setting, we report accuracy on the clean (unperturbed) test set. 
For both \randel\ and \ranmask, we observe a slight degradation in clean accuracy of 0.2--5.2\% across all datasets compared to the non-smoothed baseline,
where the highest degradation is seen on the \imdb\ dataset.
This may be due to the fact that \imdb\ is a sentiment classification task, where sensitivity
to small perturbations such as names \cite{prabhakaran-etal-2019-perturbation} and the presence or absence of single words (e.g., ``not'') is more likely.
\randel\ outperforms \ranmask\ in both clean accuracy and the size of the certificate for the four datasets with longer text.
While \randel\ fuzzes the input length, \ranmask\ preserves it,
and may therefore have an advantage for short text where the length conveys information.
However, for datasets with longer text like \lun\ and \satnews,
\ranmask\ may introduce too many masking tokens and confuse the base model.

\begin{figure}
  \centering
  \includegraphics[width=\linewidth]{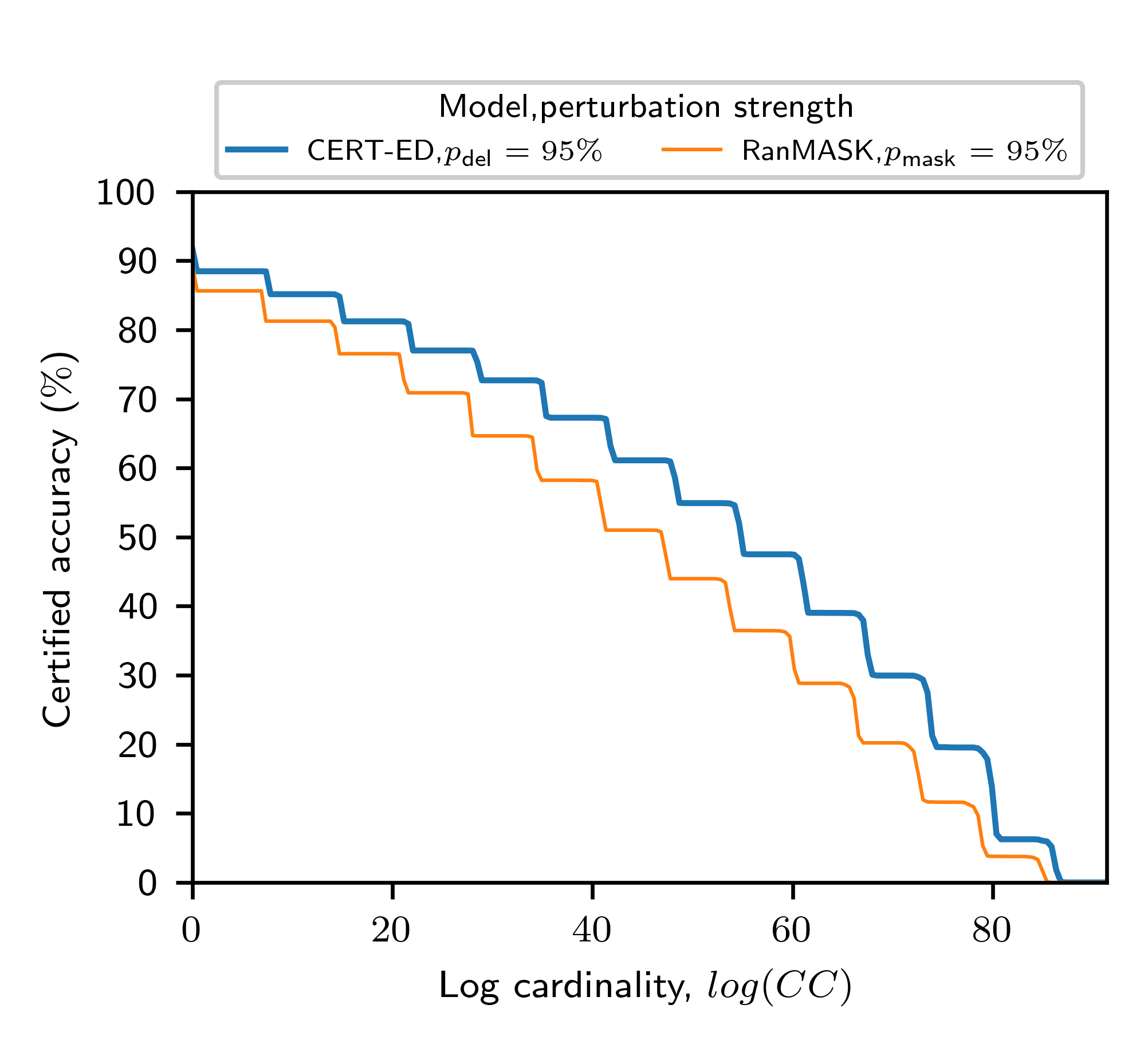}
  \caption{
    Certified accuracy for \randel\ and \ranmask\ as a function of the log-cardinality of the certificate for the \satnews\ dataset.
    We see that \randel\ certifies a set up to $10^{10}$ times larger than \ranmask\ for the same accuracy.
  }
  \label{fig:certified_accuracy-satnews}
\end{figure}

\paragraph{Certified accuracy and certified cardinality}
We report certified radius (CR) as a measure of robustness, where a larger radius indicates robustness to larger perturbations. 
However, the certified radius is not comparable between \randel\ and \ranmask, as the metric used to define the certificate for \randel\ is Levenshtein (edit) distance whereas the metric for \ranmask\ is Hamming distance. 
As an alternative, we therefore also measure the cardinality of the certificate, that is the number of perturbed textual inputs it contains, which we refer to as the certified cardinality (CC).
This is analogous to certified volume, used in prior work in the vision domain \citep{pfrommer2023projected}. 
For a given radius and vocabulary size\footnote{
  We assume a vocabulary size of $50\,265$ (matching RoBERTa), although this is a property of the threat model.
}, we compute the cardinality exactly for \ranmask\ and use a lower bound for \randel\ based on a result of \citet[Fact~17]{charalampopoulos2020unary}. 
We note that it is possible to compute the exact cardinality for a Levenshtein distance certificate using a Levenshtein automaton \citep{touzet2016on}, however the computation is expensive and only improves on the lower bound by approximately one order of magnitude. 
Despite the under-estimated certified cardinality of \randel, we find it dominates \ranmask\ across all datasets, with a widening gap for datasets with longer text. 
To assess trade-offs between robustness and accuracy, we plot the certified accuracy as a function of the log-cardinality. 
The certified accuracy at a given log-cardinality $c$ is the fraction of instances in the test set for which the model's prediction is correct and the cardinality of the certificate is at least $c$. 
Figure~\ref{fig:certified_accuracy-satnews} plots the certified accuracy for the \satnews\ dataset,
demonstrating an improvement up to \emph{$10^{10}$ times} in the certified cardinality for \randel\ compared to \ranmask.
Due to space constraints, we present certified accuracy plots for the remaining datasets in Appendix~\ref{app-sec:ca-plots},
where a strict domination of \randel\ over \ranmask\ can be observed for~4 out of~5 datasets.

\subsection{Empirical robustness} \label{sec:eval-attack}

\paragraph{Attack setup}
We evaluate the empirical robustness of \randel\ and baselines using a modified version of 
\texttt{TextAttack}\footnote{\url{https://github.com/QData/TextAttack}}~\citep{morris2020textattack} and attack recipes implemented by \citet{zhang2024textcrs}.
We select five representative attacks that cover a variety of perturbations: \clare~\citep{li-etal-2021-contextualized}, \bae~\citep{garg-ramakrishnan-2020-bae},
\bertattack~\citep{li-etal-2020-bert-attack}, \textfooler~\citep{jin2020bert}
and \deepwordbug~\citep{gao2018black}. 
We describe these attacks further in Section~\ref{sec:related-attacks}. 
We randomly select 1000 samples from the test set to evaluate the robustness of the models.
For \randel\ and \ranmask, we estimate the prediction using a Monte Carlo sample size of 100 to speed up the attack process.
We impose a \emph{10 minute timeout} for each attack and treat timeout as a failed attack.
As \randel\ is about 3~times faster than \ranmask\ (Appendix~\ref{app-sec:eval-efficiency}) for prediction and certification,
this puts \randel\ at a disadvantage as attacks against it may use up to a 3~times as many queries.
For \clare, due to the excessive amount of querying, we also limit the maximum number of queries to $10\,000$.
Further details on our categorization of attack outcomes are provided in Appendix~\ref{app-sec:eval-attack}.

\paragraph{Threat models}
We consider two distinct threat models: \emph{direct attacks} which have access to the model's confidence;
and \emph{transfer attacks} for which attacks are generated against the non-certified baseline and if successful, transferred to the target model (Appendix~\ref{app-sec:eval-attack}).
This means the robust accuracy for the non-certified baseline will always be 0\%.
We report both the \emph{clean accuracy (ClA)} on original instances, and \emph{robust accuracy (RoA)} on attacked instances.

\begin{table*}[h]
  \begin{center}
    \small
    \begin{NiceTabular}{
        l
        c
        |c
        c
        c
        c
        c
      }[colortbl-like]
      \toprule
               & \textbf{Clean}
               & \textbf{\clare}
               & \textbf{\bae}
               & \textbf{\bertattack}
               & \textbf{\textfooler}
               & \textbf{\deepwordbug}                                                                                      \\
      \textbf{Method}
               & \textbf{Accuracy \%}
               & \textbf{RoA\%}
               & \textbf{RoA\%}
               & \textbf{RoA\%}
               & \textbf{RoA\%}
               & \textbf{RoA\%}
      \\
      \midrule
      \multicolumn{7}{c}{\rowcolor{gray!10} \agnews\ dataset}                                                               \\
      \midrule
      \ns      & \textbf{94.50}        & 29.50          & 58.50          & 29.00          & 22.10          & 43.50          \\
      \ranmask & 91.70                 & \textbf{77.30} & \textbf{79.60} & 45.10          & 48.50          & 47.70          \\
      \randel  & 92.00                 & 67.90          & 78.10          & \textbf{54.40} & \textbf{56.10} & \textbf{56.90} \\
      \midrule
      \multicolumn{7}{c}{\rowcolor{gray!10} \imdb\ dataset}                                                                 \\
      \midrule
      \ns      & \textbf{94.80}        & 50.30          & 28.10          & 9.20           & 8.00           & 30.00          \\
      \ranmask & 86.90                 & \textbf{85.50} & \textbf{81.90} & 52.40          & 53.00          & 45.90          \\
      \randel  & 87.90                 & 82.30          & 71.10          & \textbf{55.30} & \textbf{54.00} & \textbf{49.90} \\
      \midrule
      \multicolumn{7}{c}{\rowcolor{gray!10} \assassin\ dataset}                                                             \\
      \midrule
      \ns      & \textbf{97.59}        & 92.05          & 94.80          & 54.55          & 36.20          & 80.20          \\
      \ranmask & 97.00                 & 95.40          & \textbf{96.00} & 93.50          & 93.94          & 92.10          \\
      \randel  & 97.20                 & \textbf{95.20} & 94.90          & \textbf{94.70} & \textbf{95.45} & \textbf{94.30} \\
      \midrule
      \multicolumn{7}{c}{\rowcolor{gray!10} \lun\ dataset}                                                                  \\
      \midrule
      \ns      & \textbf{99.50}        & 91.50          & 80.10          & 57.20          & 64.60          & 78.50          \\
      \ranmask & 98.70                 & 93.80          & 96.40          & 88.80          & 90.50          & 90.50          \\
      \randel  & 99.30                 & \textbf{95.30} & \textbf{96.60} & \textbf{92.40} & \textbf{93.90} & \textbf{93.30} \\
      \midrule
      \multicolumn{7}{c}{\rowcolor{gray!10} \satnews\ dataset}                                                              \\
      \midrule
      \ns      & \textbf{95.50}        & 84.50          & 58.60          & 51.30          & 50.70          & 61.40          \\
      \ranmask & 91.20                 & 91.20          & 91.20          & 74.50          & 80.70          & 80.80          \\
      \randel  & 93.20                 & \textbf{93.20} & \textbf{91.70} & \textbf{83.30} & \textbf{83.50} & \textbf{81.80} \\
      \bottomrule
    \end{NiceTabular}
  \end{center}
  \caption{
    Empirical attack results against \ns, \ranmask\ and \randel\ under direct attacks.
    Both \ranmask\ and \randel\ use a $90\%$ perturbation strength.
    Highlighted values are the best in each column for that dataset.
    \randel\ outperforms \ranmask\ in \emph{all} substitution and character-level attacks.
    For edit distance attacks, the performance is mixed, with \randel\ performing better for datasets with longer text.
  }
  \label{tbl:direct-attack}
\end{table*}

\paragraph{Results}
We present direct attack results in Table~\ref{tbl:direct-attack}. 
Compared to the non-certified baseline, \randel\ sacrifices a small amount of clean accuracy to achieve a significant improvement in robust accuracy across all settings.
For word substitution and character-level attacks (\bertattack, \textfooler, \deepwordbug),
\randel\ uniformly outperforms \ranmask\ in all datasets by up to $8.8\%$ robust accuracy.
Surprisingly, we find that \randel\ is less effective against edit distance attacks (\clare\ and \bae),
performing well on datasets with longer input sizes, but worse on \agnews\ and \imdb.
We attribute this to the difference in query efficiency between \ranmask\ and \randel.
For \clare\ on the \agnews\ dataset, the mean number of queries to \ranmask\ and \randel\ are $4\,253$ and $8\,891$, respectively,
meaning \randel\ is subject to a stronger attack than \ranmask.
Due to the longer text size, all \clare\ attacks timed out on the \satnews\ dataset for \randel\ and \ranmask.

\section{Related work} \label{sec:related}
Despite the rapid advancement of neural networks for many important tasks, 
their vulnerability to adversarial examples has long been known \citep{szegedy2014intriguing,
goodfellow2015explaining} and continues to be a liability for current large-scale models \citep{qi2024visual,
raina2024llmasajudge}. 
While early work focused on the vision domain, where attacks can leverage continuous optimizers, methods have 
been proposed to generate adversarial examples in the language domain across a variety of tasks 
including text classification \citep{alzantot-etal-2018-generating,garg-ramakrishnan-2020-bae,li-etal-2020-bert-attack}
and machine translation \citep{zhang-etal-2021-crafting,belinkov2018synthetic}. 
These adversarial examples are especially concerning when applied to high stakes tasks such as
content moderation and fake news detection \citep{chen-etal-2022-adversarial}.
Alongside research on attacks, certified defenses have been proposed as a strong counter-measure that can guarantee that the most pernicious 
adversarial examples do not exist within a specified threat model \citep{lecuyer2019certified,cohen2019certified}.
However, their application in NLP has been limited, in part due to the discrete nature of text 
\cite{ye-etal-2020-safer,wang-etal-2021-certified,zeng-etal-2023-certified,wang-etal-2023-robustness,zhang-etal-2024-random}.

\subsection{Attacks}\label{sec:related-attacks}

Character-level attacks focus on perturbing characters within words to maintain the imperceptibility of attacks to human
inspection. \citet{karpukhin-etal-2019-training} augmented training data with orthographic noise (character-level insertions,
substitutions, deletions, and swaps) to improve the robustness of machine translation models. 
The DeepWordBug~\citep{gao2018black} and HotFlip~\citep{ebrahimi-etal-2018-hotflip} attacks rank character and token 
importance in the input text, then greedily search for perturbations using the importance ranking to achieve 
misclassification.

Word-level attacks generate adversarial examples by perturbing individual words in text to maximize the model loss
while preserving the semantics and syntax of the original sentence. Synonym substitution is one of the most common
approaches for maintaining semantic consistency. These attacks typically start by ranking the words by their importance
and then sequentially substituting words with synonyms generated using a similarity metric
\citep{alzantot-etal-2018-generating,li-etal-2020-bert-attack,ren-etal-2019-generating,zang-etal-2020-word,jin2020bert}.
While powerful, these attacks do not fully represent the capabilities of an adversary, as they do not consider
insertion or deletion of words. Subsequent work has investigated edit distance-constrained adversarial attacks. Both
\clare~\citep{li-etal-2021-contextualized} and \bae~\cite{garg-ramakrishnan-2020-bae} explored the use of edit
perturbations beyond substitution and found them successful against NLP models.

Our method \randel\ can certify robustness against word- and character-level attacks by setting $\tokenizer$ to be a 
white-space or character-level tokenizer, accordingly.

Unlike word- or character-level attacks, that perform a limited number of localized perturbations to preserve semantics,
sentence-level attacks adversarially paraphrase entire sentences, typically using LLMs
\citep{iyyer-etal-2018-adversarial,wang-etal-2020-cat,qi-etal-2021-mind},
or using a parametrized adversarial distribution that enables gradient-based search \citep{guo-etal-2021-gradient}.

\subsection{Defenses}
Starting with adversarial training \citep{goodfellow2015explaining}, numerous empirical defense methods have been proposed
to improve robustness of NLP models \citep{ren-etal-2019-generating,zang-etal-2020-word,wang2021adversarial,zhu2020freeLB,ivgi-berant-2021-achieving,li-etal-2020-bert-attack}.
In the language domain, adversarial training perturbs inputs either in the text space %
\citep{ren-etal-2019-generating,jin2020bert,zang-etal-2020-word,li-etal-2020-bert-attack,ivgi-berant-2021-achieving,wang2021adversarial}
or in the embedding space using bounded adversarial noise \citep{miyato2017adversarial,zhu2020freeLB}.
Although empirical defenses may provide excellent robustness against attacks they are tailored for,
they cannot guarantee effectiveness against an adaptive attacker.

To mitigate issues with empirical defenses, certified defenses aim to provide a robustness guarantee against arbitrary attacks within a specified threat model.
\citet{huang-etal-2019-achieving} and \citet{jia-etal-2019-certified} first used interval bound propagation to certify robustness under synonym substitutions. 
More recently, works have applied randomized smoothing to achieve certified robustness under synonym or word substitution threat models
\citep{ye-etal-2020-safer,wang-etal-2021-certified}.
Despite the use of insertion and deletion operations in published attacks, achieving certified robustness against these operations has not been well-studied. 
To date, only \textcrs\ \citep{zhang2024textcrs} has made progress on this front. 
\textcrs\ certifies against word-level permutations and perturbations in the embedding space, which can provide 
robustness guarantees against limited word-level substitutions, deletions or insertions. 
However, \textcrs\ only partially covers the edit distance ball at a given radius, meaning its certificates are 
vacuous for our threat model (see Appendix~\ref{app-sec:textcrs}).
In the malware detection domain, \citet{huang2023rsdel} proposed a deletion-based mechanism to achieve certified 
edit distance robustness for malware binary classification models. 
Our proposed method \randel\ adapts their mechanism for the language domain and extends the certificate to  
support multi-class classification.

\section{Conclusion} \label{sec:conc}

In this work, we investigated certified robustness for natural language classification tasks,
where adversaries can perturb input text by adding, deleting, or substituting words.
We adapted randomized deletion smoothing \citep{huang2023rsdel} to the language domain,
and derived an edit distance robustness certificate for the multi-class setting.
We refer to our certified method as \randel\ and conducted comprehensive experiments on five datasets.
Our results show that \randel\ outperforms the existing randomized smoothing method (\ranmask) for
word substitution robustness in terms of both accuracy and certified cardinality on $4$ out of $5$ datasets.
Our method also excels in robustness against direct and transfer attacks, demonstrating significant improvements over existing methods.

\section{Ethical considerations} \label{sec:ethics}

This study focuses on enhancing the robustness of NLP models.
Although adversarial examples are generated during the research, their use is strictly for evaluation purposes. We also
acknowledge the assistance of ChatGPT and GitHub Copilot for scaffolding code in released artifacts.
\section{Limitations}
Robustness certification aims to measure the risk of adversarial examples,
while randomized smoothing provides both certification and mitigation against such attacks.
However, our results show that at higher smoothing levels,
randomized smoothing can reduce the benign accuracy compared to undefended models.
While our experiments are comprehensive and cover a wide range of datasets, attacks and threat models,
results might differ on other base model architectures and natural language tasks.
Although our edit distance threat model covers a boarder range of attacks than prior work on certifications for NLP, and threat models better aligned for sequence data than the bounded $\ell_p$-norm threat models popular for image research,
attackers may opt to make many edits to input data, and in some NLP tasks input semantics could be changed with few edits. 
For example, sentiment analysis can be local in nature, relying on the sentiment of a single adjective. By contrast, tasks like fake news detection typically rely on more global features to distinguish task classes.
Lastly, while our approach is more scalable than alternative certification strategies, it does introduce computational overheads.

\bibliography{custom}

\appendix

\section{Proofs for Section~\ref{sec:randel-cert}} \label{sec:proofs}

In this appendix, we provide proofs of the certification results presented in Section~\ref{sec:randel-cert}. 
Our proofs follow \citep{huang2023rsdel}, however we additionally provide an upper bound on the smoothed 
classifier's score, which is needed to support certification of multi-class classifiers. 

We begin by defining notation to express the deletion mechanism symbolically. 
Recall from Section~\ref{sec:randel} that $\vec{\mdel} = (\mdel_1, \ldots, \mdel_n)$ is a vector of deletion indicator 
variables for input text $\vec{x}$ containing $n = \abs{\tokenizer(\vec{x})}$ tokens, where $\mdel_i = 1$ if the 
$i$-th token is to be deleted and $\mdel_i = 0$ otherwise. 
The space of possible deletion indicators for input text $\vec{x}$ is denoted 
$\scrE(\vec{x}) = \{0, 1\}^{\abs{\tokenizer(\vec{x})}}$.
We let $q(\vec{\mdel}) = \prod_i p_\del^{\mdel_i} (1 - p_\del)^{1 - \mdel_i}$ denote the (Bernoulli) 
probability mass for a given $\vec{\mdel}$. 
We write $\apply(\vec{x}, \vec{\mdel})$ to denote the resultant text after deleting the 
tokens referenced in $\vec{\mdel}$ from text $\vec{x}$.   

Using this notation, we can express the smoothed classifier's score defined in \eqref{eqn:smooth-score} as a sum 
over the space of deletion indicator variables:
\begin{gather}
  p_y(\vec{x}) = \sum_{\medit \in \scrE(\vec{x})} s(\medit, \vec{x}) \\ 
    \text{with} \ s(\medit, \vec{x}) = q(\medit) \ind{\base{f}(\apply(\vec{x}, \medit)) = y}.
\end{gather}

The first step in our analysis is to identify a correspondence between deletions for neighboring text. 
Let
\begin{align*}
  \sqsubseteq \: = \{ \,
      (\vec{\mdel}, \vec{\mdel}')  \in \scrE(\vec{x}) \times \scrE(\vec{x}) : 
      \forall i, \mdel_i \leq \mdel_i' 
  \, \}.
\end{align*}
be a partial order on the space of deletion indicators $\scrE(\vec{x})$.
We can then write $\vec{\mdel} \sqsubseteq \vec{\mdel}'$ if $\vec{\mdel}'$ can be obtained from $\vec{\mdel}$ by 
deleting additional tokens.
This allows us to define a set of deletions building on $\vec{\mdel}$:
\begin{equation}
  \scrE(\vec{x}, \vec{\mdel}) \coloneq \{\, \vec{\mdel}' \in \scrE(\vec{x}) : \vec{\mdel} \sqsubseteq \vec{\mdel}' \,\}.
\end{equation}
The following result adapted from \citep[Lemma~4]{huang2023rsdel} identifies pairs of deletions $\pert{\medit}$ to 
$\pert{\vec{x}}$ and $\medit$ to $\vec{x}$ such that the terms $s(\pert{\vec{x}}, \pert{\medit})$ and 
$s(\vec{x}, \medit)$ are proportional.

\begin{lemma}[\citealp{huang2023rsdel}] \label{lem:equiv-edit-del}
  Let $\vec{z}^\star$ be a longest common subsequence~\citep{wagner1974string} of $\tokenizer(\pert{\vec{x}})$ and 
  $\tokenizer(\vec{x})$, and let 
  $\pert{\medit}^\star \in \scrE(\pert{\vec{x}})$ and $\medit^\star \in \scrE(\vec{x})$ be any deletions such that 
  $\apply(\pert{\vec{x}}, \pert{\medit}^\star) = \apply(\vec{x}, \medit^\star) = \tokenizer^{-1}(\vec{z}^\star)$.
  Then there exists a bijection $m\colon \scrE(\pert{\vec{x}}, \pert{\medit}^\star) \to \scrE(\vec{x}, \medit^\star)$ 
  such that $\apply(\pert{\vec{x}}, \pert{\medit}) = \apply(\pert{\vec{x}}, \medit^\star)$ for any 
  $\pert{\medit} \sqsupseteq \pert{\medit}^\star$.
  Furthermore for $\vec{\mdel} = m(\pert{\vec{\mdel}})$ we have
  \begin{equation*}
    s(\pert{\vec{\mdel}}, \pert{\vec{x}}) = 
      p_\del^{\abs{\tokenizer(\pert{\vec{x}})} - \abs{\tokenizer(\vec{x})}} s(\vec{\mdel}, \vec{x}).
  \end{equation*}
\end{lemma}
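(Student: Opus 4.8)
The plan is to show that refinements of the two reference deletions $\pert{\medit}^\star$ and $\medit^\star$ are each parameterised by the same combinatorial object---namely, the choice of which tokens of the longest common subsequence $\vec{z}^\star$ are deleted---and to build the bijection $m$ out of this shared parameterisation. Since $\apply(\pert{\vec{x}}, \pert{\medit}^\star) = \tokenizer^{-1}(\vec{z}^\star)$, the deletion $\pert{\medit}^\star$ removes exactly those tokens of $\pert{\vec{x}}$ lying outside a fixed order-preserving embedding of $\vec{z}^\star$ into $\tokenizer(\pert{\vec{x}})$; the analogous statement holds for $\medit^\star$ with respect to $\tokenizer(\vec{x})$. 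Any $\pert{\medit} \sqsupseteq \pert{\medit}^\star$ therefore deletes all of these non-LCS tokens and, in addition, some subset of the $\abs{\vec{z}^\star}$ embedded tokens. I would encode this extra choice as a subset $S \subseteq \{1, \ldots, \abs{\vec{z}^\star}\}$ recording which LCS positions $\pert{\medit}$ deletes on top of $\pert{\medit}^\star$.

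First I would define $m$ by sending the deletion $\pert{\medit}$ indexed by $S$ to the deletion $\medit \in \scrE(\vec{x}, \medit^\star)$ that deletes the non-LCS tokens of $\vec{x}$ together with precisely the embedded tokens indexed by $S$. The assignment $\pert{\medit} \mapsto S$ is a bijection from $\scrE(\pert{\vec{x}}, \pert{\medit}^\star)$ onto $\powerset{\{1, \ldots, \abs{\vec{z}^\star}\}}$, and likewise $S \mapsto \medit$ is a bijection onto $\scrE(\vec{x}, \medit^\star)$; composing the two yields the required bijection $m$. Next I would verify the $\apply$ identity: both $\pert{\medit}$ and $m(\pert{\medit})$ retain exactly the LCS tokens indexed by $\{1, \ldots, \abs{\vec{z}^\star}\} \setminus S$ in their common relative order, so $\apply(\pert{\vec{x}}, \pert{\medit})$ and $\apply(\vec{x}, m(\pert{\medit}))$ are the same string, and hence $\base{f}$ returns the same label on both.

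Finally I would establish the proportionality of the scores by counting. Writing $k = \abs{\vec{z}^\star} - \abs{S}$ for the number of surviving tokens, which is identical for $\pert{\medit}$ and $m(\pert{\medit})$, the Bernoulli mass factorises as $q(\pert{\medit}) = p_\del^{\abs{\tokenizer(\pert{\vec{x}})} - k}(1 - p_\del)^{k}$ and $q(m(\pert{\medit})) = p_\del^{\abs{\tokenizer(\vec{x})} - k}(1 - p_\del)^{k}$. The exponents on $(1 - p_\del)$ cancel, leaving $q(\pert{\medit}) = p_\del^{\abs{\tokenizer(\pert{\vec{x}})} - \abs{\tokenizer(\vec{x})}} q(m(\pert{\medit}))$. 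Combining this with the $\apply$ identity and the definition $s(\medit, \vec{x}) = q(\medit)\ind{\base{f}(\apply(\vec{x}, \medit)) = y}$ gives the claimed relation $s(\pert{\medit}, \pert{\vec{x}}) = p_\del^{\abs{\tokenizer(\pert{\vec{x}})} - \abs{\tokenizer(\vec{x})}} s(m(\pert{\medit}), \vec{x})$.

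I expect the main obstacle to be the bookkeeping that justifies the shared parameterisation rigorously: one must fix compatible order-preserving embeddings of $\vec{z}^\star$ into both $\tokenizer(\pert{\vec{x}})$ and $\tokenizer(\vec{x})$, argue that every refinement of $\pert{\medit}^\star$ deletes only embedded (LCS) tokens---which follows from $\apply(\pert{\vec{x}}, \pert{\medit}^\star) = \tokenizer^{-1}(\vec{z}^\star)$---and confirm that deleting the LCS tokens indexed by the same set $S$ on either side leaves matching subsequences. This is essentially the argument of \citet[Lemma~4]{huang2023rsdel}, adapted to the token-level deletion mechanism; the remaining counting steps are routine.
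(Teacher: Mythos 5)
Your proof is correct, and it follows essentially the only natural route: the paper itself offers no proof of this lemma (it is imported from \citet[Lemma~4]{huang2023rsdel}), and your parameterisation of refinements by subsets $S$ of LCS positions, the induced bijection $m$, the matching of the retained token subsequences, and the cancellation of the $(1-p_\del)^k$ factors reproduce exactly the argument of that cited source. One remark: the property displayed in the lemma as stated, $\apply(\pert{\vec{x}}, \pert{\medit}) = \apply(\pert{\vec{x}}, \medit^\star)$, is evidently a typo for $\apply(\pert{\vec{x}}, \pert{\medit}) = \apply(\vec{x}, m(\pert{\medit}))$ --- the form actually used in the score computation \eqref{eqn:relate-score-pair} --- and your argument correctly establishes this intended statement.
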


Using the above result, we can relate the smoothed classifier's score at $\pert{\vec{x}}$ and $\vec{x}$ as follows:
\begin{align}
  p_y(\pert{\vec{x}}) 
  &= \sum_{\pert{\vec{\mdel}} \in \scrE(\pert{\vec{x}})} s(\pert{\vec{\mdel}}, \pert{\vec{x}}) \nonumber \\
  & = p_\del^{\abs{\tokenizer(\pert{\vec{x}})} - \abs{\tokenizer(\vec{x})}} \left(
      p_y(\vec{x}) - \sum_{\vec{\mdel} \notin \scrE(\vec{x}, \vec{\mdel}^\star)} s(\vec{\mdel}, \vec{x}) 
    \right) \nonumber \\
  & \qquad {} + \sum_{\pert{\vec{\mdel}} \notin \scrE(\pert{\vec{x}}, \pert{\vec{\mdel}}^\star)} 
    s(\pert{\vec{\mdel}}, \pert{\vec{x}}). \label{eqn:relate-score-pair}
\end{align}
We can bound the sums in the above expression using the following result.
\begin{lemma}\label{lem:bound-remainder} 
  We have
  \begin{equation*}
    0 \leq \sum_{\vec{\mdel} \notin \scrE(\vec{x}, \vec{\mdel}^\star)} s(\vec{\mdel}, \vec{x}) \leq 1 - p_\del^{\sum_i \mdel_i^\star}.
  \end{equation*}
\end{lemma}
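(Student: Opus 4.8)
The plan is to prove both inequalities directly from the definition $s(\vec{\mdel}, \vec{x}) = q(\vec{\mdel})\, \ind{\base{f}(\apply(\vec{x}, \vec{\mdel})) = y}$, exploiting the fact that $q$ is a probability mass function on $\scrE(\vec{x})$. The lower bound is immediate: each summand is a product of the nonnegative Bernoulli mass $q(\vec{\mdel})$ and an indicator in $\{0,1\}$, hence $s(\vec{\mdel}, \vec{x}) \geq 0$, so any sum of such terms is nonnegative.

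For the upper bound, I would first discard the indicator by bounding it above by $1$, giving $s(\vec{\mdel}, \vec{x}) \leq q(\vec{\mdel})$ termwise, and therefore
\begin{equation*}
  \sum_{\vec{\mdel} \notin \scrE(\vec{x}, \vec{\mdel}^\star)} s(\vec{\mdel}, \vec{x})
    \leq \sum_{\vec{\mdel} \notin \scrE(\vec{x}, \vec{\mdel}^\star)} q(\vec{\mdel}).
\end{equation*}
Since $q$ sums to $1$ over all of $\scrE(\vec{x})$, the right-hand side equals $1 - \sum_{\vec{\mdel} \in \scrE(\vec{x}, \vec{\mdel}^\star)} q(\vec{\mdel})$, reducing the problem to evaluating the mass of the ``up-set'' $\scrE(\vec{x}, \vec{\mdel}^\star)$.

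The one computational step is this mass. By definition $\vec{\mdel} \in \scrE(\vec{x}, \vec{\mdel}^\star)$ iff $\mdel_i^\star \leq \mdel_i$ for every coordinate $i$, which forces $\mdel_i = 1$ at every index where $\mdel_i^\star = 1$ and leaves the remaining coordinates unconstrained. Because the $\mdel_i$ are independent $\bernoulli(p_\del)$, the probability factorizes: each constrained coordinate contributes a factor $p_\del$ and each free coordinate contributes a factor $1$ (summing its two outcomes). Hence
\begin{equation*}
  \sum_{\vec{\mdel} \in \scrE(\vec{x}, \vec{\mdel}^\star)} q(\vec{\mdel})
    = \prod_{i : \mdel_i^\star = 1} p_\del = p_\del^{\sum_i \mdel_i^\star},
\end{equation*}
and substituting back yields the claimed upper bound $1 - p_\del^{\sum_i \mdel_i^\star}$.

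There is no real obstacle here; the result is elementary once the indicator is relaxed. The only point requiring care is correctly identifying the up-set $\scrE(\vec{x}, \vec{\mdel}^\star)$ with ``fix the coordinates of $\vec{\mdel}^\star$ that equal $1$, leave the rest free,'' so that the Bernoulli product collapses to $p_\del$ raised to the Hamming weight $\sum_i \mdel_i^\star$ of $\vec{\mdel}^\star$. I would state the factorization explicitly rather than appeal to it verbally, since it is the crux of the bound.
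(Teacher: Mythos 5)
Your proof is correct and follows essentially the same route as the paper: relax the indicator so that $s(\vec{\mdel},\vec{x}) \le q(\vec{\mdel})$, pass to the complement using $\sum_{\vec{\mdel} \in \scrE(\vec{x})} q(\vec{\mdel}) = 1$, and identify the probability mass of the up-set $\scrE(\vec{x}, \vec{\mdel}^\star)$ as $p_\del^{\sum_i \mdel_i^\star}$. The only (minor) difference is that you compute that mass exactly via coordinatewise factorization, whereas the paper factors out $p_\del^{\sum_i \mdel_i^\star}$ and bounds the residual sum by one; your explicit factorization is arguably cleaner and sidesteps the typographical slips in the paper's displayed chain (an $s$ that should be a $q$, and a stray ``$;h$'').
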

\begin{proof}
  The lower bound is straightforward, since the summand is non-negative.
  For the upper bound, observe that
  \begin{align*}
    & \sum_{\vec{\mdel} \notin \scrE(\vec{x}, \vec{\mdel}^\star)} s(\vec{\mdel}, \vec{x}; h) \\
    & = 1 - \sum_{\vec{\mdel} \in \scrE(\vec{x}, \vec{\mdel}^\star)} s(\vec{\mdel}, \vec{x}; h) \\
    & \leq 1 - \sum_{\vec{\mdel} \in \scrE(\vec{x}, \vec{\mdel}^\star)} q(\vec{\mdel}) \\
    & = 1 - p_\del^{\sum_i \mdel_i^\star} 
      \sum_{\vec{\mdel} \in \scrE(\vec{x}, \vec{\mdel}^\star)} q(\vec{\mdel} - \vec{\mdel}^\star) \\
    & \leq 1 - p_\del^{\sum_i \mdel_i^\star}
  \end{align*}
\end{proof}

Combining these results yields upper and lower bounds on the smoothed classifier's score for neighboring text 
$\pert{\vec{x}}$ to input text $\vec{x}$.

\pairwisecert*
\begin{proof}
  We obtain upper and lower bounds on the expression for $p_y(\pert{\vec{x}})$ in \eqref{eqn:relate-score-pair} 
  using Lemma~\ref{lem:bound-remainder}. 
  Replacing the sum over $\pert{\vec{\mdel}}$ by a lower bound and the sum over $\vec{\mdel}$ by an upper bound
  yields:
  \begin{equation*}
    p_y(\pert{\vec{x}}) \geq p_\del^{\abs{\tokenizer(\pert{\vec{x}})} - \abs{\tokenizer(\vec{x})}} \left(
      p_y(\vec{x}) - 1 + p_\del^{\sum_i \mdel_i^\star}
    \right).
  \end{equation*}
  Similarly, replacing the sum over $\pert{\vec{\mdel}}$ by an upper bound and the sum over $\vec{\mdel}$ by a lower 
  bound yields:
  \begin{equation*}
    p_y(\pert{\vec{x}}; h) \leq p_\del^{\abs{\tokenizer(\pert{\vec{x}})} - \abs{\tokenizer(\vec{x})}} 
    p_y(\vec{x}) + 1 - p_\del^{\sum_i \pert{\mdel}_i^\star} 
  \end{equation*}
  The final result is obtained by observing 
  $\abs{\tokenizer(\vec{x})} = \abs{\tokenizer(\pert{\vec{x}})} + n_\ins - n_\del$, 
  $\sum_i \pert{\mdel}_i^\star = n_\sub + n_\del$ and 
  $\sum_i \mdel_i^\star = n_\sub + n_\ins$.
\end{proof}

Finally we extend the pairwise certificate to a certificate over a Levenshtein (edit) distance ball.

\levcert*
\begin{proof}
  By definition, the smoothed classifier is robust in the Levenshtein distance neighborhood 
  $B_\radius(\vec{x}; \opset, \tokenizer)$ iff the difference between the score for the predicted class $y$ and any 
  other class $y'' \neq y$ is positive for all $\pert{\vec{x}} \in B_\radius(\vec{x}; \opset, \tokenizer)$:
  \begin{equation}
    \min_{\pert{\vec{x}} \in B_\radius(\vec{x}; \opset, \tokenizer)} \left\{ 
      p_y(\pert{\vec{x}}) - \max_{y'' \neq y} p_{y''}(\pert{\vec{x}}) 
      \right\} > 0.
    \label{eqn:tight-robust-defn}
  \end{equation}
  This condition is satisfied if a lower bound on the LHS is positive. 
  We obtain a lower bound on the LHS using bounds on $p_y(\pert{\vec{x}})$ from 
  Theorem~\ref{thm:del-cert-pairwise} and bounds on $p_y(\vec{x})$ from the theorem statement:
  \begin{equation*}
    \text{LHS of \eqref{eqn:tight-robust-defn}} \geq 
    \min_{\substack{n_\del, n_\ins, n_\sub \geq 0 \\\text{s.t.} n_\del + n_\ins + n_\sub \leq \radius}} 
      \psi(n_\del, n_\ins, n_\sub)
  \end{equation*}
  where the objective is
  \begin{equation*}
  \begin{split}
    & \psi(n_\del, n_\ins, n_\sub) = p_\del^{n_\del - n_\ins} (\mu_y - 1 + p_\del^{n_\sub + n_\ins}) \\
    & \qquad {} - p_\del^{n_\del - n_\ins} \mu_{y'} - 1 + p_\del^{n_\sub + n_\del}.
  \end{split}
  \end{equation*}
  It is straightforward to show that this objective is monotonically decreasing in $n_\sub$, hence the minimum occurs 
  at $(n_\del, n_\ins, n_\sub) = (0, 0, \radius)$. 
  Recalling that the lower bound $\psi(0, 0, \radius)$ must be positive to guarantee robustness, and solving for 
  $\radius$, yields 
  \begin{equation}
    \radius \leq \log_{p_\del}\left(\nicefrac{2 + \mu_{y'} - \mu_y}{2}\right). \label{eqn:del-cert-radius}    
  \end{equation}
  Enforcing the constraint that $\radius$ is an integer yields the required result.
\end{proof}
\section{Parameter settings}
\label{app-sec:parameters}

\begin{table*}[h]
  \begin{center}
    \small
    \begin{tabular}{cll}
      \toprule
                                           & \textbf{Parameter} & \textbf{Values}                                             \\\midrule\midrule
      \multirow{2}{*}{\textbf{Base model}} & Model              & \texttt{AutoModelForSequenceClassification("roberta-base")} \\\cmidrule(lr){2-3}
                                           & Tokenizer          & \texttt{AutoTokenizer("roberta-base")}                      \\\midrule
      \multirow{2}{*}{\textbf{Scheduler}}  & Python command     & \texttt{transformers.get\_linear\_schedule\_with\_warmup}   \\\cmidrule(lr){2-3}
                                           & Warmup epochs      & \texttt{10}                                                 \\\midrule
      \multirow{4}{*}{\textbf{Optimizer}}  & Python class       & \texttt{torch.optim.AdamW}                                  \\\cmidrule(lr){2-3}
                                           & Learning rate      & \texttt{2.0E-5}                                             \\\cmidrule(lr){2-3}
                                           & Weight decay       & \texttt{1.0E-6}                                             \\\cmidrule(lr){2-3}
                                           & Gradient clipping  & \texttt{clip\_grad\_norm\_(model.parameters(), 1.0)}        \\\midrule
      \multirow{4}{*}{\textbf{Training}}   & Batch size         & \texttt{32}                                                 \\\cmidrule(lr){2-3}
                                           & Max.\ epoch        & \texttt{200}                                                \\\cmidrule(lr){2-3}
                                           & Early stopping     & No improvement in validation loss after 25 epochs           \\\bottomrule
    \end{tabular}
  \end{center}
  \caption{
    Parameter settings for RoBERTa, the optimizer and training procedure.
    Parameter settings are consistent across all models (\ns, \randel, \ranmask) except where specified.
  }
  \label{tbl:model-parameters}
\end{table*}

We fine-tune the RoBERTa model for the non-certified baseline, \randel\ and \ranmask\ on the respective training
datasets.
The default parameter settings for the experiments are shown in Table~\ref{tbl:model-parameters}.
We do not explicitly calibrate the optimizer or training schedule for each model,
as we find the default settings work well across all datasets.
When approximating the smoothed models (\randel\ and \ranmask) we use a Monte Carlo sample of 1000 perturbed inputs
for prediction and 4000 perturbed inputs for certification, while setting the significance level to 0.05.

\section{Certified robustness} \label{app-sec:certification-detailed}

\subsection{Certification statistics}
\label{app-sec:cr-statistics}

\begin{table}
  \begin{center}
    \small
    \begin{NiceTabular}{
        l@{}r
        r
        r
        r
      }[colortbl-like]
      \toprule
                                &                  & Clean            & Median & Median         \\
      Model                     & $p_\del/p_\mask$ & Accuracy         & CR     & $\log$ CC      \\
      \midrule
      \multicolumn{5}{c}{\rowcolor{gray!10} \agnews\ dataset (avg length 37.84)}                \\
      \midrule
      \ns                       & ---              & 94.84\%          & ---    & ---            \\
      \midrule
      \multirow{3}{*}{\ranmask} & 80\%             & \textbf{93.91\%} & 2      & 12.25          \\
                                & 90\%             & \textbf{92.43\%} & 4      & \textbf{22.91} \\
                                & 95\%             & \textbf{88.86\%} & 4      & \textbf{23.67} \\
      \midrule
      \multirow{3}{*}{\randel}  & 80\%             & 93.33\%          & 2      & \textbf{12.65} \\
                                & 90\%             & 91.72\%          & 3      & 18.65          \\
                                & 95\%             & 87.75\%          & 3      & 18.65          \\
      \midrule
      \multicolumn{5}{c}{\rowcolor{gray!10} \imdb\ dataset (avg length 231.16)}                 \\
      \midrule
      \ns                       & ---              & 93.47\%          & ---    & ---            \\
      \midrule
      \multirow{3}{*}{\ranmask} & 80\%             & \textbf{90.23\%} & 1      & 7.41           \\
                                & 90\%             & 86.87\%          & 2      & 14.02          \\
                                & 95\%             & 50.00\%          & 2      & 9.40           \\
      \midrule
      \multirow{3}{*}{\randel}  & 80\%             & 89.60\%          & 1      & \textbf{7.50}  \\
                                & 90\%             & \textbf{88.26\%} & 2      & \textbf{14.49} \\
                                & 95\%             & \textbf{85.58\%} & 3      & \textbf{21.20} \\
      \midrule
      \multicolumn{5}{c}{\rowcolor{gray!10} \assassin\ dataset (avg length 228.16)}             \\
      \midrule
      \ns                       & ---              & 98.02\%          & ---    & ---            \\
      \midrule
      \multirow{3}{*}{\ranmask} & 80\%             & \textbf{97.86\%} & 3      & 19.99          \\
                                & 90\%             & 97.65\%          & 6      & 37.49          \\
                                & 95\%             & 96.05\%          & 11     & 67.06          \\
      \midrule
      \multirow{3}{*}{\randel}  & 80\%             & 97.81\%          & 3      & \textbf{20.63} \\
                                & 90\%             & \textbf{97.81\%} & 6      & \textbf{38.66} \\
                                & 95\%             & \textbf{97.81\%} & 10     & 67.04          \\
      \midrule
      \multicolumn{5}{c}{\rowcolor{gray!10} \lun\ dataset (avg length 269.93)}                  \\
      \midrule
      \ns                       & ---              & 99.16\%          & ---    & ---            \\
      \midrule
      \multirow{3}{*}{\ranmask} & 80\%             & 98.67\%          & 3      & 19.73          \\
                                & 90\%             & 97.91\%          & 6      & 34.51          \\
                                & 95\%             & 95.62\%          & 10     & 60.45          \\
      \midrule
      \multirow{3}{*}{\randel}  & 80\%             & \textbf{98.85\%} & 3      & \textbf{20.62} \\
                                & 90\%             & \textbf{98.28\%} & 6      & \textbf{37.94} \\
                                & 95\%             & \textbf{96.11\%} & 10     & \textbf{61.44} \\
      \midrule
      \multicolumn{5}{c}{\rowcolor{gray!10} \satnews\ dataset (avg length 384.84)}              \\
      \midrule
      \ns                       & ---              & 94.22\%          & ---    & ---            \\
      \midrule
      \multirow{3}{*}{\ranmask} & 80\%             & 93.10\%          & 2      & 14.30          \\
                                & 90\%             & 92.09\%          & 4      & 27.84          \\
                                & 95\%             & 90.10\%          & 7      & 47.09          \\
      \midrule
      \multirow{3}{*}{\randel}  & 80\%             & \textbf{95.60\%} & 2      & \textbf{14.83} \\
                                & 90\%             & \textbf{93.18\%} & 5      & \textbf{35.08} \\
                                & 95\%             & \textbf{92.07\%} & 8      & \textbf{54.76} \\
      \bottomrule
    \end{NiceTabular}
  \end{center}
  \caption{
    Full certification results supplementing Table~\ref{tbl:cr-statistics-small}.
    All metrics are computed using the entire test set.
    ``Median CR'' is the median certified Levenshtein distance radius and ``median $\log$ CC'' is the median
    log-certified cardinality.
    The certified cardinality is exact for \ranmask, however a lower bound is used for \randel.
    \randel\ outperforms \ranmask\ in terms of certified accuracy for for 4 out of 5 datasets and 22 out of 30 metrics,
    and it specifically excels on datasets with longer average text length.
  }
  \label{tbl:cr-statistics}
\end{table}

We first present results of \randel\ and \ranmask\ on the \agnews, \lun, and \satnews\ datasets (Table~\ref{tbl:cr-statistics}).
In general, we see a minor drop in clean accuracy with increasing perturbation strength for both methods. 
Although in one case \ranmask\ suffers a catastrophic drop in accuracy to to 50\% for \imdb\ with a perturbation strength of 95\%. %
Surprisingly on \satnews, the smoothed classifiers achieve a higher clean accuracy than the baseline model.
While smoothing does not significantly impact accuracy on the \agnews, \lun\ and \satnews\ datasets,
it does have a pronounced impact on the \imdb\ dataset.
This is likely due to the fact that the \imdb\ sentiment classification task is more sensitive to small perturbations
such as names \citep{prabhakaran-etal-2019-perturbation}.

We observe that \randel\ dominates \ranmask\ in terms of clean accuracy and certified cardinality for 4 out of 5 datasets.
However, for the shorter \agnews\ dataset with an average input length of 37.84 words,
the results are more mixed, with \ranmask\ coming out on top when the perturbation strength is above $90\%$.
This is to be expected, as masking preserves more spatial information compared to deletion,
and it becomes advantageous when the input text sequence is shorter. We provide a more detailed analysis in Section~\ref{sec:eval-cr-input-size}.

\subsection{Certified accuracy plots}
\label{app-sec:ca-plots}

\begin{figure*}
  \centering
  \begin{subfigure}[b]{0.45\textwidth}
    \centering
    \includegraphics[width=\textwidth]{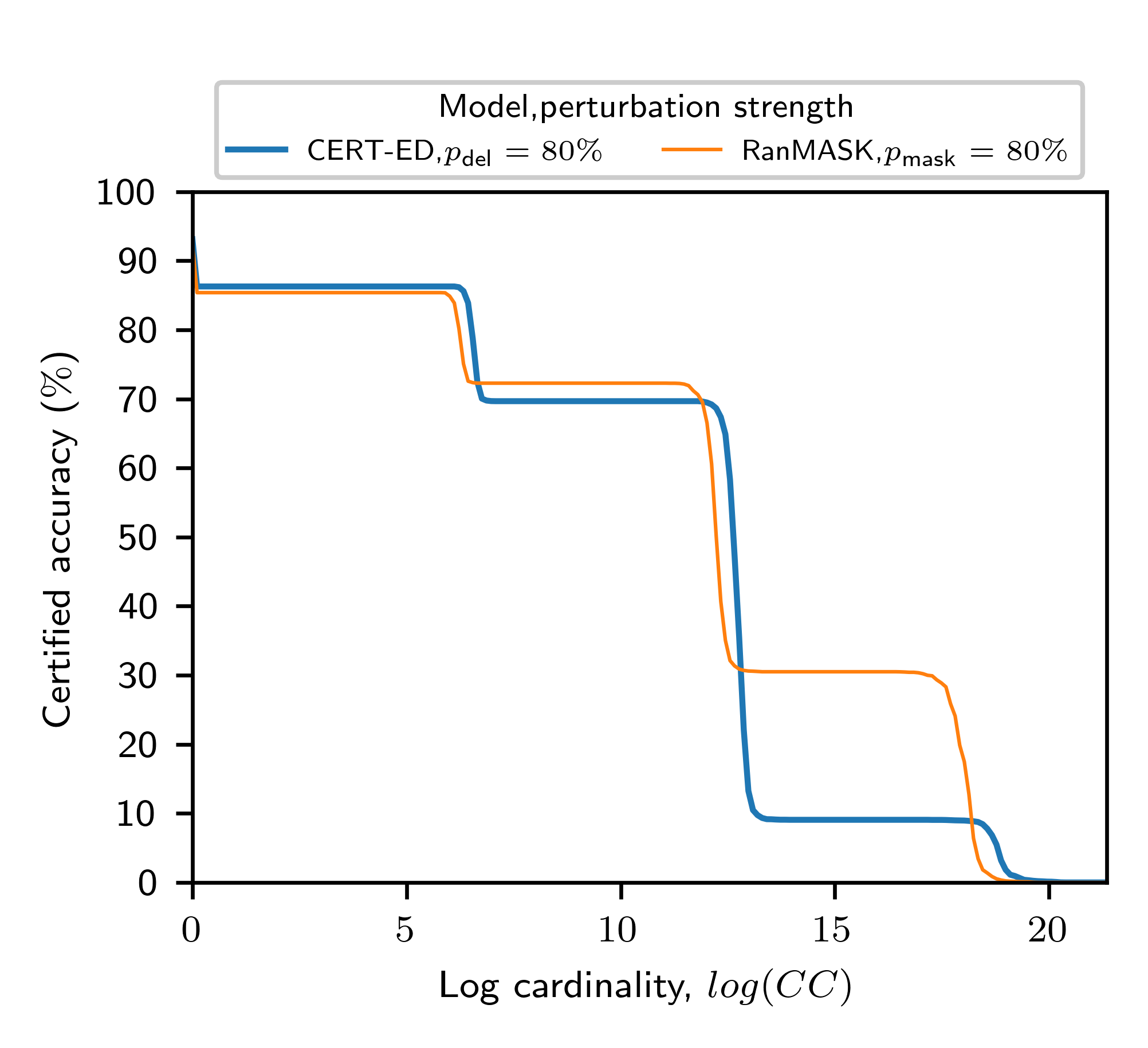}
    \caption{\agnews\ dataset}
    \label{fig:certified_accuracy-ag_news}
  \end{subfigure}
  \hfill
  \begin{subfigure}[b]{0.45\textwidth}
    \centering
    \includegraphics[width=\textwidth]{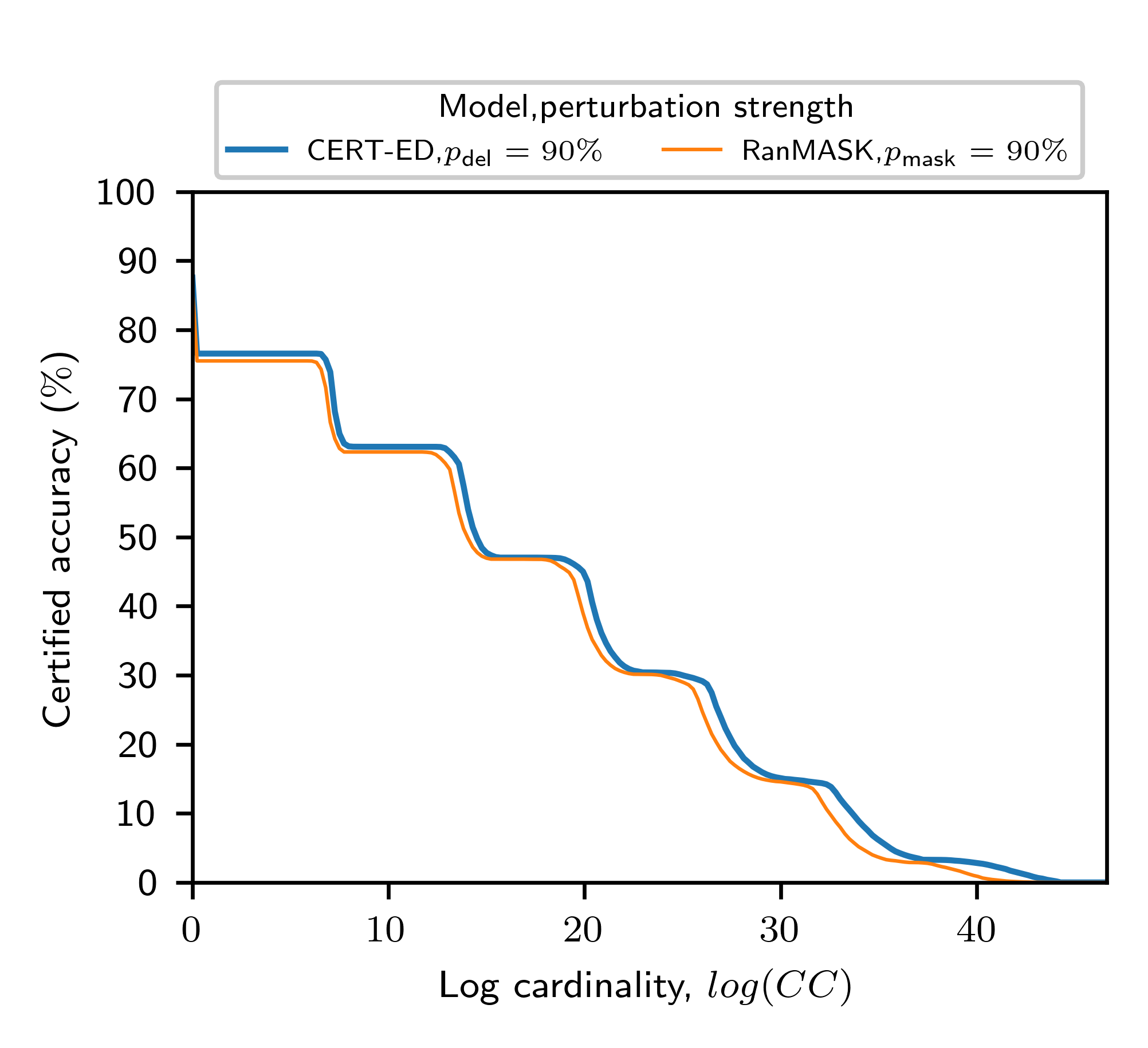}
    \caption{\imdb\ dataset}
    \label{fig:certified_accuracy-imdb}
  \end{subfigure}
  \vfill
  \begin{subfigure}[b]{0.45\textwidth}
    \centering
    \includegraphics[width=\textwidth]{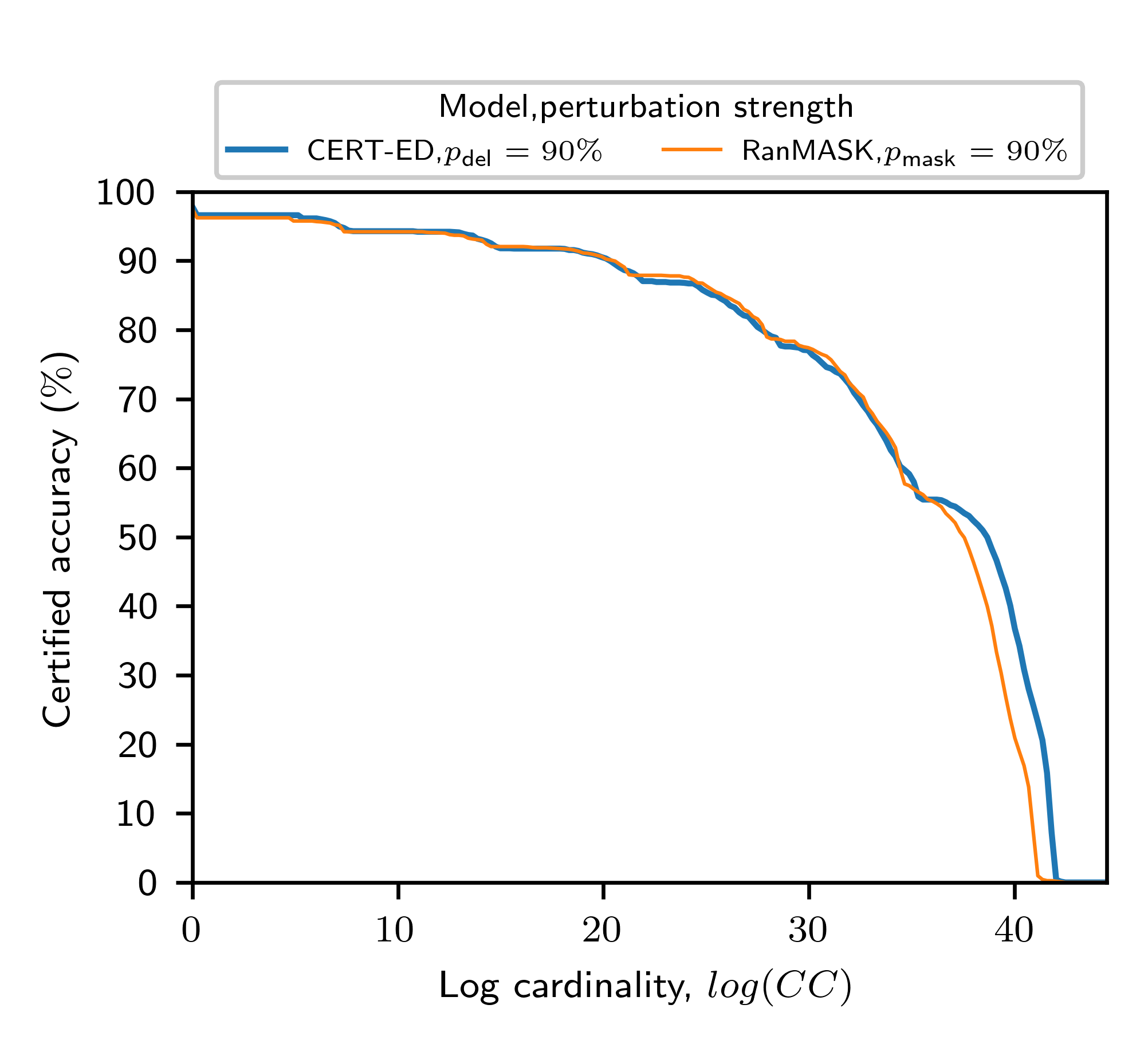}
    \caption{\assassin\ dataset}
    \label{fig:certified_accuracy-assassin}
  \end{subfigure}
  \hfill
  \begin{subfigure}[b]{0.45\textwidth}
    \centering
    \includegraphics[width=\textwidth]{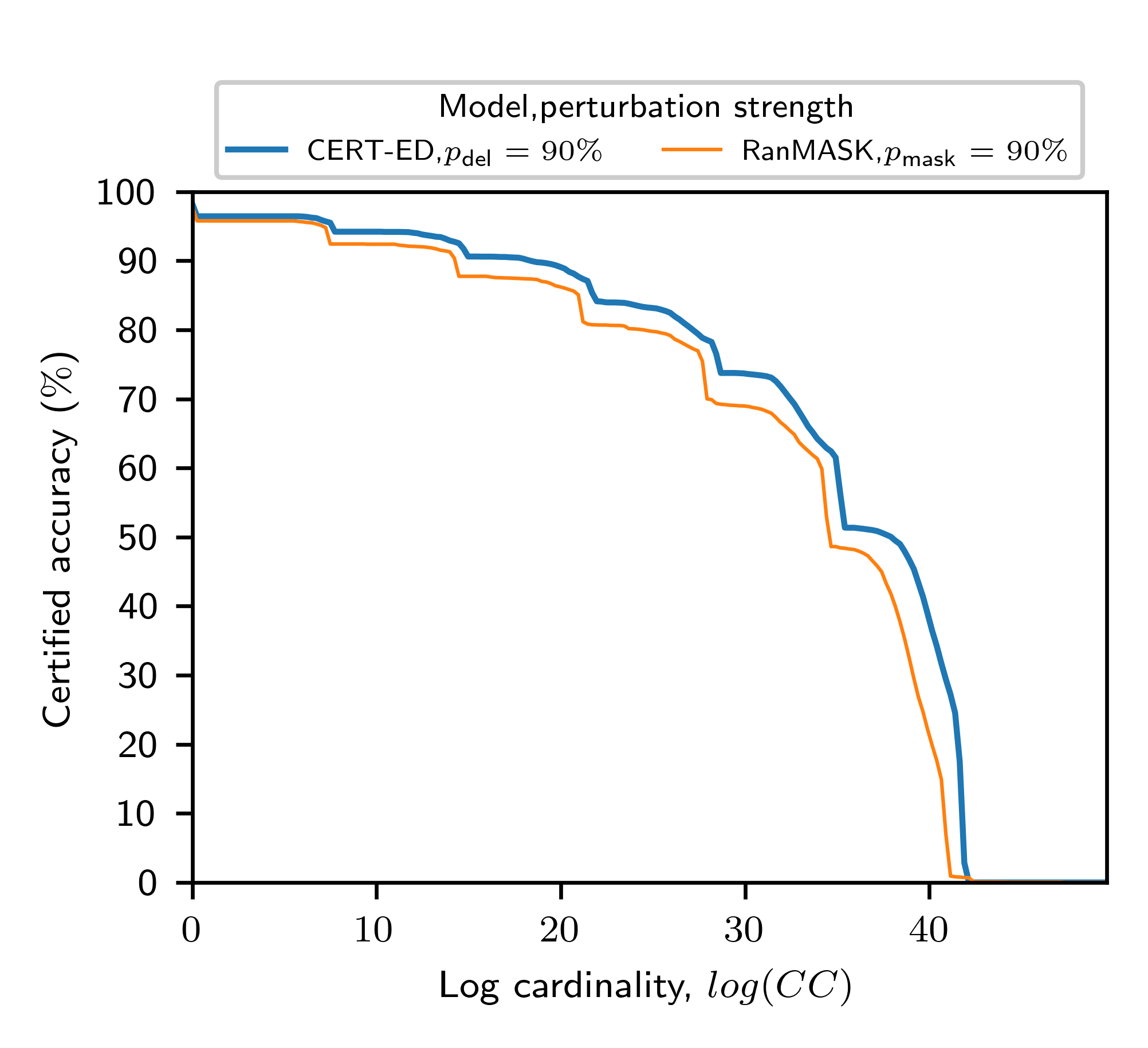}
    \caption{\lun\ dataset}
    \label{fig:certified_accuracy-lun}
  \end{subfigure}
  \caption{
    Certified accuracy for \randel\ and \ranmask\ as a function of log certified cardinality
    and perturbation strength $p_\del$ and $p_\mask$ (line styles).
    The certified cardinality is exact for \ranmask\ but a lower bound is used for \randel.
    \randel\ dominates \ranmask\ in terms of certified accuracy for 3 out of 4 datasets.
    See Figure~\ref{fig:certified_accuracy-satnews} for certified accuracy on \satnews. 
  }
  \label{fig:certified_accuracy-all}
\end{figure*}

Figure~\ref{fig:certified_accuracy-all} plots the certified accuracy as a function of the log certified cardinality
and perturbation strength for 4 datasets: \agnews, \imdb, \assassin, and \lun. 
The corresponding plot for \satnews\ is featured in Figure~\ref{fig:certified_accuracy-satnews}. 
Note that we are varying the cardinality of the certificate, rather than the radius, so that we can reasonably compare 
the size of \randel\ and \ranmask\ certificates, which are defined using different distance metrics 
(Levenshtein distance for \randel, and Hamming distance for \ranmask). 
We compute the cardinality exactly for Hamming distance and use a lower bound for Levenshtein distance, which typically 
underestimates the exact value by 1~order of magnitude.
Similar to the results in Table~\ref{tbl:cr-statistics-small},
\randel\ outperforms \ranmask\ on both \lun\ and \satnews\ for all perturbation strengths.
For \agnews, the results are more mixed with \ranmask\ outperforming \randel\ at a perturbation strength of $90\%$. 

\subsection{Impact of deletion rate}
\begin{table}
  \begin{center}
    \small
    \begin{NiceTabular}{
        l
        r
        r
        r
      }[colortbl-like]
      \toprule
      \randel  & Clean    & Median & Median                                         \\
      $p_\del$ & Accuracy & CR     & $\log$ CC                                      \\
      \midrule
      \multicolumn{4}{c}{\rowcolor{gray!10} \agnews\ dataset (avg length 37.84)}    \\
      \midrule
      50\%     & 94.76    & 0      & 0.00                                           \\
      60\%     & 95.07    & 1      & 6.55                                           \\
      70\%     & 94.58    & 1      & 6.56                                           \\
      80\%     & 93.33    & 2      & 12.65                                          \\
      90\%     & 91.72    & 3      & 18.65                                          \\
      95\%     & 87.75    & 3      & 18.65                                          \\
      99\%     & 25.05    & 0      & 0.00                                           \\
      \midrule
      \multicolumn{4}{c}{\rowcolor{gray!10} \imdb\ dataset (avg length 231.16)}     \\
      \midrule
      50\%     & 94.34    & 0      & 0.00                                           \\
      60\%     & 93.16    & 1      & 7.11                                           \\
      70\%     & 93.00    & 1      & 7.14                                           \\
      80\%     & 89.60    & 1      & 7.50                                           \\
      90\%     & 88.26    & 2      & 14.49                                          \\
      95\%     & 85.58    & 3      & 21.20                                          \\
      99\%     & 68.02    & 6      & 39.64                                          \\
      \midrule
      \multicolumn{4}{c}{\rowcolor{gray!10} \assassin\ dataset (avg length 228.16)} \\
      \midrule
      50\%     & 98.19    & 0      & 0.00                                           \\
      60\%     & 98.49    & 1      & 7.32                                           \\
      70\%     & 98.23    & 1      & 7.32                                           \\
      80\%     & 97.81    & 3      & 20.63                                          \\
      90\%     & 97.81    & 6      & 38.66                                          \\
      95\%     & 97.81    & 10     & 67.04                                          \\
      99\%     & 84.19    & 25     & 152.84                                         \\
      \midrule
      \multicolumn{4}{c}{\rowcolor{gray!10} \lun\ dataset (avg length 269.93)}      \\
      \midrule
      50\%     & 99.29    & 0      & 0.00                                           \\
      60\%     & 99.33    & 1      & 7.48                                           \\
      70\%     & 99.40    & 1      & 7.48                                           \\
      80\%     & 98.85    & 3      & 20.62                                          \\
      90\%     & 98.28    & 6      & 37.94                                          \\
      95\%     & 96.11    & 10     & 61.44                                          \\
      99\%     & 86.89    & 19     & 118.78                                         \\
      \midrule
      \multicolumn{4}{c}{\rowcolor{gray!10} \satnews\ dataset (avg length 384.84)}  \\
      \midrule
      50\%     & 94.97    & 0      & 0.00                                           \\
      60\%     & 96.53    & 1      & 7.59                                           \\
      70\%     & 95.03    & 1      & 7.59                                           \\
      80\%     & 95.60    & 2      & 14.83                                          \\
      90\%     & 93.18    & 5      & 35.08                                          \\
      95\%     & 92.07    & 8      & 54.76                                          \\
      99\%     & 85.81    & 16     & 104.73                                         \\
      \bottomrule
    \end{NiceTabular}
  \end{center}
  \caption{
    Ablation study of \randel\ with varying deletion rates $p_\del$.
    All metrics are computed using the entire test set.
    ``Median CR'' is the median certified Levenshtein distance radius and ``median $\log$ CC'' is the median
    log-certified cardinality.
    The certified cardinality is estimated using a lower bound.
    For datasets with longer text, the deletion rate has less impact on accuracy.
  }
  \label{tbl:cr-ablation}
\end{table}

We present a comprehensive ablation study of \randel\ with varying deletion rates in Table~\ref{tbl:cr-ablation}.
We observe that for shorter sequences, the deletion rate has a more significant impact on the clean accuracy.
For \imdb, the maximum possible certified cardinality is much lower compared to other datasets,
further demonstrating the sensitivity of sentiment classification to perturbations.
Being the simplest dataset, \assassin\ has the highest certified cardinality among all datasets.

\subsection{Impact of input text length}
\label{sec:eval-cr-input-size}
\begin{table*}
  \begin{center}
    \small
    \begin{NiceTabular}{
      c@{\hspace{5px}}c
      |c@{\hspace{5px}}c
      |c@{\hspace{5px}}c
      |c@{\hspace{5px}}c
      |c@{\hspace{5px}}c
      |c@{\hspace{5px}}c
      }[colortbl-like]
      \toprule
         &
         & \multicolumn{2}{c}{\ns}
         & \multicolumn{2}{c}{\ranmask\ $80\%$}
         & \multicolumn{2}{c}{\ranmask\ $90\%$}
         & \multicolumn{2}{c}{\randel\ $80\%$}
         & \multicolumn{2}{c}{\randel\ $90\%$}                                                                                             \\
      \cmidrule(lr){3-4} \cmidrule(lr){5-6} \cmidrule(lr){7-8} \cmidrule(lr){9-10} \cmidrule(lr){11-12}
      \textbf{Quartile}
         & \textbf{Avg. length}
         & \textbf{ClA\%}                       & \textbf{$\log$ CC}
         & \textbf{ClA\%}                       & \textbf{$\log$ CC}
         & \textbf{ClA\%}                       & \textbf{$\log$ CC}
         & \textbf{ClA\%}                       & \textbf{$\log$ CC}
         & \textbf{ClA\%}                       & \textbf{$\log$ CC}
      \\
      \midrule
      \multicolumn{12}{c}{\rowcolor{gray!10} \agnews\ dataset}                                                                             \\
      \midrule
      Q1 & 27.25                                & 93.62              & --- & 92.33 & 11.95 & 90.85 & 17.71 & 91.22 & 12.39 & 89.37 & 12.60 \\
      Q2 & 35.03                                & 94.93              & --- & 93.90 & 12.18 & 92.99 & 17.99 & 93.90 & 12.65 & 92.54 & 18.58 \\
      Q3 & 40.38                                & 95.21              & --- & 94.00 & 12.32 & 92.33 & 23.72 & 93.74 & 12.79 & 92.43 & 18.81 \\
      Q4 & 50.74                                & 95.88              & --- & 95.82 & 12.49 & 94.00 & 24.17 & 94.94 & 12.93 & 93.06 & 24.72 \\
      \midrule
      \multicolumn{12}{c}{\rowcolor{gray!10} \imdb\ dataset}                                                                               \\
      \midrule
      Q1 & 92.32                                & 94.74              & --- & 92.12 & 6.79  & 89.56 & 13.18 & 90.76 & 7.05  & 88.94 & 13.48 \\
      Q2 & 147.20                               & 95.46              & --- & 92.56 & 13.31 & 89.06 & 13.55 & 90.51 & 7.21  & 89.20 & 13.98 \\
      Q3 & 215.62                               & 94.17              & --- & 90.91 & 13.59 & 86.88 & 20.05 & 89.30 & 7.41  & 87.61 & 14.40 \\
      Q4 & 461.22                               & 89.51              & --- & 85.31 & 7.40  & 81.93 & 14.42 & 87.83 & 14.51 & 87.28 & 21.61 \\
      \midrule
      \multicolumn{12}{c}{\rowcolor{gray!10} \assassin\ dataset}                                                                           \\
      \midrule
      Q1 & 74.23                                & 98.01              & --- & 97.51 & 18.59 & 97.84 & 31.80 & 98.18 & 13.78 & 97.68 & 30.93 \\
      Q2 & 171.62                               & 99.32              & --- & 99.49 & 19.97 & 99.49 & 38.35 & 99.15 & 20.54 & 99.49 & 39.10 \\
      Q3 & 281.63                               & 96.80              & --- & 96.80 & 20.55 & 96.13 & 39.50 & 96.63 & 21.14 & 97.14 & 40.51 \\
      Q4 & 395.60                               & 97.97              & --- & 97.64 & 21.07 & 97.13 & 40.72 & 97.30 & 21.70 & 96.96 & 41.70 \\
      \midrule
      \multicolumn{12}{c}{\rowcolor{gray!10} \lun\ dataset}                                                                                \\
      \midrule
      Q1 & 84.65                                & 99.26              & --- & 98.64 & 18.62 & 98.21 & 35.79 & 98.70 & 16.87 & 97.16 & 32.02 \\
      Q2 & 226.05                               & 99.57              & --- & 99.57 & 20.24 & 99.19 & 39.07 & 99.32 & 20.83 & 99.50 & 39.88 \\
      Q3 & 363.68                               & 98.32              & --- & 97.58 & 14.26 & 96.40 & 34.24 & 98.07 & 21.55 & 97.20 & 41.21 \\
      Q4 & 404.00                               & 99.50              & --- & 98.88 & 21.09 & 97.83 & 34.43 & 99.32 & 21.72 & 99.26 & 41.75 \\
      \midrule
      \multicolumn{12}{c}{\rowcolor{gray!10} \satnews\ dataset}                                                                            \\
      \midrule
      Q1 & 353.34                               & 95.23              & --- & 96.12 & 14.24 & 94.51 & 27.71 & 97.06 & 14.73 & 95.23 & 34.93 \\
      Q2 & 384.64                               & 94.97              & --- & 93.76 & 14.28 & 92.87 & 27.77 & 95.69 & 14.76 & 93.98 & 35.11 \\
      Q3 & 398.19                               & 93.32              & --- & 92.05 & 14.30 & 91.20 & 27.82 & 95.60 & 14.79 & 92.10 & 35.19 \\
      Q4 & 416.96                               & 93.36              & --- & 90.36 & 14.34 & 89.66 & 27.91 & 93.95 & 14.86 & 91.36 & 35.27 \\
      \bottomrule
    \end{NiceTabular}
  \end{center}
  \caption{
    Clean accuracy (ClA) and median log-certified cardinality (log CC) for \ranmask\ and \randel\ as a function of perturbation strength grouped
    by input text length quintiles.
    The entire test set is used to compute all metrics.
    Q1 refers to the first quintile while Q4 refers to the fourth quintile.
    The certified cardinality is exact for \ranmask\ but a lower bound is used for \randel.
  }
  \label{tbl:cr-input-size}
\end{table*}
The results of clean accuracy and certified cardinality, grouped by text length quintiles, are shown in Table~\ref{tbl:cr-input-size}.
For the \agnews\ dataset, which has a shorter average text length,
\ranmask\ suffers a minor drop in clean accuracy from Q1 to Q2,
while \randel\ is impacted more significantly.
This supports our hypothesis that \ranmask\ is more advantageous when the input text sequence is shorter.

\section{Details on attacks}
\label{app-sec:eval-attack}

\subsection{Attack setup}
\paragraph{Attacks covered}
We evaluate empirical robustness under various attacks using a modified version of \texttt{TextAttack}~\citep{morris2020textattack}
and attack templates implemented by \citet{zhang2024textcrs}.
We select five representative attacks which can be categorized as follows:
\begin{itemize}[leftmargin=*]
  \item \clare~\citep{li-etal-2021-contextualized} and \bae~\citep{garg-ramakrishnan-2020-bae} both cover a token-wise edit distance threat model;
  \item \bertattack~\citep{li-etal-2020-bert-attack} and \textfooler~\citep{jin2020bert} operate by substituting words in the input text;
  \item \deepwordbug~\citep{gao2018black} modifies the input text by altering characters within each word.
\end{itemize}

\paragraph{Attack results}
Each attack can yield one of four distinct outcomes, namely, \textit{success}, \textit{fail}, \textit{skipped} or \textit{timeout},
Their meanings are as follows:
\begin{itemize}[leftmargin=*]
  \item \textit{success} indicates the attack was able to generate an adversarial example by perturbing the prediction from the correct label to a false label.
  \item \textit{fail} indicates the attack was unable to generate an adversarial example.
        This can happen for the following reasons:
        firstly, the attack was unable to find a perturbation that changes the prediction;
        secondly, the attack was unable to find a perturbation that changes the prediction after exhausting all options;
        or, lastly, the attack reached the maximum number of queries on the target model. 
        We enforce a maximum limit of 10000 queries for Clare, but place no limit on the other attacks.
  \item \textit{skipped} indicates the attack was skipped because the model's prediction was incorrect in the first place.
  \item \textit{timeout} indicates the attack was skipped because it took too long to generate an adversarial example.
        In our experiments, we set the timeout to be 600 seconds. Note that this generally puts \randel\ at a disadvantage compared to 
        \ranmask, because \randel\ can process queries roughly 3~times faster than \ranmask.
        Unless otherwise specified, we treat timeout as fail.
\end{itemize}

The robust accuracy is defined as the fraction of instances for which the attack outcome is either \textit{fail} or \textit{timeout}---i.e., 
the fraction of instances for which the model's prediction remains correct after the attack.

\subsection{Transfer attack on \texttt{ag-news}}

\begin{table*}[h]
  \begin{center}
    \small
    \begin{NiceTabular}{
      l
      |c@{\hspace{5px}}c
      |c@{\hspace{5px}}c
      |c@{\hspace{5px}}c
      |c@{\hspace{5px}}c
      |c@{\hspace{5px}}c
      }[colortbl-like]
      \toprule
               & \multicolumn{2}{c}{\textbf{\clare}}
               & \multicolumn{2}{c}{\textbf{\bae}}
               & \multicolumn{2}{c}{\textbf{\bertattack}}
               & \multicolumn{2}{c}{\textbf{\textfooler}}
               & \multicolumn{2}{c}{\textbf{\deepwordbug}}                                                                                                                                                          \\
      \cmidrule(lr){2-3} \cmidrule(lr){4-5} \cmidrule(lr){6-7} \cmidrule(lr){8-9} \cmidrule(lr){10-11}
      \textbf{Method}
               & \textbf{ClA\%}                            & \textbf{RoA\%}                                                                                                                                         %
               & \textbf{ClA\%}                            & \textbf{RoA\%}                                                                                                                                         %
               & \textbf{ClA\%}                            & \textbf{RoA\%}                                                                                                                                         %
               & \textbf{ClA\%}                            & \textbf{RoA\%}                                                                                                                                         %
               & \textbf{ClA\%}                            & \textbf{RoA\%}                                                                                                                                         %
      \\
      \midrule
      \multicolumn{11}{c}{\rowcolor{gray!10} \agnews\ dataset}                                                                                                                                                      \\
      \midrule
      \ranmask & 93.51                                     & \textbf{90.41} & 89.50          & 86.46          & 93.30          & 88.28          & 94.08          & \textbf{90.77} & 91.41          & \textbf{82.81} \\
      \randel  & \textbf{93.66}                            & 90.27          & \textbf{90.06} & \textbf{86.74} & \textbf{93.46} & \textbf{89.04} & \textbf{94.21} & 89.67          & \textbf{91.60} & 81.45          \\
      \midrule
      \multicolumn{11}{c}{\rowcolor{gray!10} \imdb\ dataset}                                                                                                                                                        \\
      \midrule
      \ranmask & 79.33                                     & \textbf{75.51} & 85.31          & 82.31          & 87.73          & 81.66          & 87.67          & 77.30          & 84.26          & 76.54          \\
      \randel  & \textbf{82.02}                            & 75.28          & \textbf{86.81} & \textbf{82.76} & \textbf{89.02} & \textbf{83.06} & \textbf{88.94} & \textbf{78.57} & \textbf{85.96} & \textbf{76.70} \\
      \midrule
      \multicolumn{11}{c}{\rowcolor{gray!10} \assassin\ dataset}                                                                                                                                                    \\
      \midrule
      \ranmask & \textbf{94.92}                            & \textbf{91.53} & \textbf{95.74} & \textbf{91.49} & 98.49          & 95.27          & 98.21          & \textbf{94.47} & 96.57          & 88.57          \\
      \randel  & \textbf{94.92}                            & 89.83          & \textbf{95.74} & 89.36          & \textbf{98.71} & \textbf{96.34} & \textbf{98.70} & 94.31          & \textbf{98.86} & \textbf{93.14} \\
      \midrule
      \multicolumn{11}{c}{\rowcolor{gray!10} \lun\ dataset}                                                                                                                                                         \\
      \midrule
      \ranmask & 92.16                                     & 90.20          & 95.29          & 93.72          & 97.14          & 88.57          & 96.53          & 87.86          & 94.20          & 86.47          \\
      \randel  & \textbf{96.08}                            & \textbf{94.12} & \textbf{96.86} & \textbf{96.86} & \textbf{98.33} & \textbf{94.76} & \textbf{98.27} & \textbf{92.49} & \textbf{96.62} & \textbf{91.79} \\
      \midrule
      \multicolumn{11}{c}{\rowcolor{gray!10} \satnews\ dataset}                                                                                                                                                     \\
      \midrule
      \ranmask & 72.16                                     & 69.07          & 86.15          & 83.38          & 88.48          & 82.95          & 87.95          & 82.05          & 85.59          & 79.88          \\
      \randel  & \textbf{80.41}                            & \textbf{77.32} & \textbf{89.20} & \textbf{86.98} & \textbf{91.24} & \textbf{87.33} & \textbf{90.45} & \textbf{87.50} & \textbf{88.89} & \textbf{86.19} \\
      \bottomrule
    \end{NiceTabular}
  \end{center}
  \caption{
    Empirical attack results when transferring \emph{successful} adversarial examples against the non-certified baseline to 
    \randel\ and \ranmask.
    Both \randel\ and \ranmask\ use a perturbation strength of 90\%.
    Clean and robust accuracy are abbreviated ClA and RoA, respectively.
    Highlighted values are the best in each column for that dataset.
    \randel\ outperforms \ranmask\ in all word substitution and character-level attacks.
  }
  \label{tbl:transfer-attack}
\end{table*}

We perform transfer attacks by applying successful attack examples against the non-certified baseline to the smoothed models (\randel, \ranmask).
Table~\ref{tbl:transfer-attack} reports both clean and robust accuracy since the successful example against each attack will be slightly different.
Unlike the direct attack results, \randel\ does not have a strict dominance over \ranmask\ against \bertattack, \textfooler, and \deepwordbug.
While for \clare\ and \bae, \randel\ is marginally better rather than tied.
These results demonstrate the robustness of \randel\ to transfer attacks,
showing that \randel\ is more robust to adversarial examples for the non-certified baseline than \ranmask.
\section{Efficiency and computation requirements}
\label{app-sec:eval-efficiency}
In this appendix, we document the computation requirements to train, certify, and attack models used in our work.
We also compare and contrast the efficiency of \randel\ and \ranmask\ in terms of training and certification.
We show that \randel\ is more efficient than \ranmask\ in both aspects.

\subsection{Hardware}
All experiments in this paper are conducted using a private cluster with Intel(R) Xeon(R) Gold 6326 CPU @ 2.90GHz and NVIDIA A100 GPUs.
Unless otherwise specified, we use a single GPU for all experiments.

\subsection{Train}
\begin{table*}
  \begin{center}
    \small
    \begin{NiceTabular}{l@{}r|rrrrrr}
      \toprule
                & Train     & \multicolumn{2}{c}{\ns} & \multicolumn{2}{c}{\ranmask, 90\%} & \multicolumn{2}{c}{\randel, 90\%}                                  \\
      Dataset   & \#samples & epochs                  & sec/epoch                          & epochs                            & sec/epoch & epochs & sec/epoch \\
      \midrule
      \agnews   & 108\,000  & 65                      & 517                                & 105                               & 476       & 100    & 231       \\
      \imdb     & 22\,500   & 30                      & 258                                & 60                                & 341       & 65     & 128       \\
      \assassin & 2\,152    & 40                      & 27                                 & 50                                & 35        & 40     & 13        \\
      \lun      & 13\,416   & 55                      & 143                                & 65                                & 258       & 60     & 55        \\
      \satnews  & 22\,738   & 55                      & 260                                & 80                                & 461       & 95     & 101       \\
      \bottomrule
    \end{NiceTabular}
  \end{center}
  \caption{
    Training time statistics for each dataset and model. 
    The number of epochs varies due to early stopping.
  }
  \label{tbl:train-time}
\end{table*}

Table~\ref{tbl:train-time} shows the number of epochs used to train each model\slash dataset (with early stopping) and the training time per epoch.
\randel\ is about 2--3~times faster to train than the non-smoothed baseline, and 2--5~times faster to train than \ranmask.
The total computation used across all datasets for certification is estimated to be 70~hours A100~GPU time.

\subsection{Certification}
\begin{table*}
  \begin{center}
    \small
    \begin{NiceTabular}{l@{}r|rrrr}
      \toprule
                & Test      & \multicolumn{1}{c}{\ranmask, 90\%} & \multicolumn{1}{c}{\randel, 90\%} \\
      Dataset   & \#samples & \multicolumn{1}{c}{ms / sample}    & \multicolumn{1}{c}{ms/sample}     \\
      \midrule
      \agnews   & 7\,600    & 3\,969                             & 2\,367                            \\
      \imdb     & 25\,000   & 13\,331                            & 3\,311                            \\
      \assassin & 2\,378    & 13\,899                            & 3\,319                            \\
      \lun      & 6\,454    & 14\,641                            & 4\,819                            \\
      \satnews  & 7\,202    & 17\,767                            & 5\,778                            \\
      \bottomrule
    \end{NiceTabular}
  \end{center}
  \caption{
    Certification time on the test set for each dataset, including overheads.
    We use 1\,000 Monte Carlo samples for prediction and 4\,000 samples for estimating certified radii.
    During attacks, we use 100~samples for prediction, which cuts the prediction time by 1/40 ignoring overheads.
  }
  \label{tbl:certify-time}
\end{table*}

Table~\ref{tbl:certify-time} shows the average certification time per test instance, including overheads.
We see \randel\ is about 3~times faster than \ranmask\ on average across all datasets.
The total computation used across all datasets for certification is estimated to be 250~hours A100~GPU time.

\subsection{Empirical robustness}

\begin{table*}
  \begin{center}
    \small
    \begin{NiceTabular}{l|rrr}
      \toprule
                   & \multicolumn{1}{c}{\ns} & \multicolumn{1}{c}{\ranmask, 90\%} & \multicolumn{1}{c}{\randel, 90\%} \\
      Dataset      & sec / sample            & sec / sample                       & sec / sample                      \\
      \midrule
      \clare       & 192                     & 527                                & 504                               \\
      \bae         & 268                     & 502                                & 461                               \\
      \bertattack  & 34                      & 333                                & 234                               \\
      \textfooler  & 12                      & 302                                & 173                               \\
      \deepwordbug & 7                       & 155                                & 62                                \\
      \bottomrule
    \end{NiceTabular}
  \end{center}
  \caption{Attack time per instance on a subset of 1000 instances from the \imdb\ test set.
    The timeout window is set to be 10~minutes. 
    Note that most \clare\ and \bae\ attacks targeting \ranmask\ timed out.
    This partially explains the lower robust accuracy of \randel\ compared to \ranmask\ in Table~\ref{tbl:direct-attack}.
  }
  \label{tbl:attack-sample-time}
\end{table*}

Table~\ref{tbl:attack-sample-time} reports the average attack time per instance for a subset of the \imdb\ test set.
Attack times on other datasets follow a similar pattern.
We note that the time taken to attack \ranmask\ is longer than for the non-smoothed baseline and \randel.
Combined with the max timeout window of 10~minutes, this partially explains the lower robust accuracy of
\randel\ compared to \ranmask\ in Table~\ref{tbl:direct-attack}.
We utilized parallelized attacks to speed up the process.
The total computation used across all datasets for attacks is estimated to be 200~days A100~GPU time.

\section{Edit distance certificates for Text-CRS} \label{app-sec:textcrs}

In this appendix, we analyze two robustness certificates from the Text-CRS 
framework~\citep{zhang2024textcrs}, which cover deletion\slash insertion 
perturbations to text represented as a sequence of token embedding vectors. 
Each certificate is parameterized by two radii: one bounds the perturbation 
to the token embedding vectors and the other bounds the extent of token-level 
reordering. 
We obtain lower bounds on these two radii such that each certificate covers 
up to $\radius$ \emph{arbitrary} edits of a single type (deletion\slash 
insertion).
Using these bounds, we can immediately convert a Text-CRS deletion\slash 
insertion certificate to an edit distance certificate where the allowed edit 
operations are deletions\slash insertions to input tokens. 
We find that the resulting edit distance certificates are vacuous 
($\radius = 0$) for sequences greater than 2 tokens in length when instantiated 
with the Text-CRS smoothing mechanisms. 
As a result, we have opted not to include Text-CRS as a baseline in our 
experiments. 

\subsection{Preliminaries}

Text-CRS indirectly bounds edits to input text by instead bounding 
numerical additive perturbations and permutations in word embedding space. 
This is not in one-to-one correspondence as we see input edits are not easily 
bounded. 
Concretely, input text is represented as an array of embedding vectors 
$\vec{w} \in \reals^{n \times k}$, 
where the first dimension corresponds to words and the second dimension 
corresponds to dimensions in the embedding space.
Input text with $m < n$ actual words is represented by filling the last 
$n - m$ rows with padding words\slash vectors.  
Robustness is studied under input transformations that are a composition of:
(1)~perturbations to the embedding vectors and (2)~word-level permutations 
of the embedding vectors. 
Here we define notation to represent these transformations. 

\paragraph{Embedding perturbations}
An embedding perturbation is an array $\vec{\delta}$ of the same type as the 
input $\vec{w}$. 
The result of applying $\vec{\delta}$ to $\vec{w}$ is simply 
$\vec{w} + \vec{\delta}$.
We consider two norms to measure the magnitude of the perturbation:
\begin{itemize}[leftmargin=*]
  \item $\| \vec{\delta} \|_0 \coloneqq \sum_{i = 1}^{n} \ind{\sum_{j = 1}^{n} \abs{\delta_{i,j}} \neq 0}$ 
  is the sum of non-zero rows in $\vec{\delta}$, i.e., the number of perturbed 
  words\slash vectors; and
  \item $\| \vec{\delta} \|_2 \coloneqq \sqrt{\sum_{i = 1}^{n} \delta_{i,j}^2}$ 
  is the Frobenius norm.
\end{itemize}

\paragraph{Permutations}
A word-level permutation of an input $\vec{w}$ is parameterized by a 
permutation matrix $\vec{\pi} \in \mathcal{P}_n$.
Here $\mathcal{P}_n = \{ \vec{\pi} \in \{0, 1\}^{n \times n} : 
\sum_{i' = 1}^{n} \pi_{i',j} = 1, \sum_{j' = 1}^{n} \pi_{i,j'} = 1 
\forall i, j \}$ denotes the set of $n \times n$ permutation matrices. 
The result of applying $\vec{\pi}$ to $\vec{w}$ is simply $\pi \cdot \vec{w}$ 
where $\cdot$ denotes matrix multiplication.
The magnitude of the perturbation $\| \vec{\pi} \|_1$ is measured in terms of 
the $\ell_1$ distance between the new word locations and the original 
locations.
We can equivalently express this as the row-wise sum of the absolute distance 
of each 1 from the diagonal in the permutation matrix $\vec{\pi}$:
$\| \vec{\pi} \|_1 = \sum_{i = 1}^{n} \abs{i - \sum_{j = 1}^{n} j \ind{\pi_{i, j} = 1}}$.

\paragraph{Composition}
The composition of the embedding perturbation $\vec{\delta}$ and 
permutation $\vec{\pi}$ is an input transformation 
$T_{\vec{\delta}, \vec{\pi}}: \reals^{n \times d} \to \reals^{n \times d}$ such 
that
$T_{\vec{\delta}, \vec{\pi}}(\vec{w}) = \vec{\pi} \cdot (\vec{w} + \vec{\delta})$.

\subsection{Word-level deletion}

Text-CRS covers deletion using a certificate that constrains the 
number of modified embedding vectors and the sum of word position changes.

\begin{definition}
  A Text-CRS deletion certificate at input $\vec{w} \in \reals^{n \times d}$ 
  is a set of inputs parameterized by two radii $\radius_R, \radius_D \geq 0$:
  \begin{align*}
    & C_D(\vec{w}; \radius_R, \radius_D) = \{ \, 
      T_{\vec{\delta}, \vec{\pi}}(\vec{w}) \in \reals^{n \times d} : 
        \vec{\pi} \in \mathcal{P}_n, \\
      & \quad \vec{\delta} \in \reals^{n \times d}, \|\vec{\pi}\|_1 < \radius_R, 
      \|\vec{\delta}\|_0 < \radius_D
    \, \}.
  \end{align*}  
\end{definition}

We note that this form of certificate is not tight for deletion. 
For example, it includes invalid inputs that contain padding in the middle of 
the sequence, and it also includes inputs where words are replaced by 
ordinary (non-padding) words. 

We are interested in determining values of $\radius_R$ and $\radius_D$ such 
that the Text-CRS deletion certificate covers a standard edit distance 
certificate constrained to deletions (see~\eqref{eqn:edit-dist-cert}). 
This will allow us to compare Text-CRS and \randel.

\begin{proposition} \label{prop:textcrs-del-conversion}
  The Text-CRS deletion certificate contains a deletion-based edit distance 
  certificate for any input $\vec{w} \in \reals^{n \times d}$, meaning
  $C_D(\vec{w}; \radius_R, \radius_D) \supseteq B_{\radius}(\vec{w}; \{\del\})$, if 
  \begin{equation*}
  \radius_D = \radius \text{ and } 
  \radius_R \geq \begin{cases}
    2 \radius (n - \radius), & n \geq 2 \radius, \\
    n^2/2, & n < 2 \radius.
  \end{cases}
  \end{equation*} 
\end{proposition}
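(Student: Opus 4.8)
The plan is to prove the set inclusion directly: I would fix an arbitrary element $\pert{\vec{w}} \in B_{\radius}(\vec{w}; \{\del\})$ and exhibit an explicit pair $(\vec{\delta}, \vec{\pi})$ witnessing $\pert{\vec{w}} = T_{\vec{\delta}, \vec{\pi}}(\vec{w}) = \vec{\pi} \cdot (\vec{w} + \vec{\delta})$ whose norms respect the stated radii. Writing $m \leq n$ for the number of real (non-padding) words in $\vec{w}$, such a $\pert{\vec{w}}$ arises by deleting some set $P$ of $k \leq \radius$ word positions and left-shifting the $m - k$ survivors, with fresh padding appended at the tail. The natural witness takes $\vec{\delta}$ to overwrite exactly the $k$ deleted rows with the padding vector (leaving every other row untouched), and takes $\vec{\pi}$ to slide each surviving word into its compacted position while routing the $k$ now-padded rows to the tail. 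By construction $\vec{\pi} \cdot (\vec{w} + \vec{\delta}) = \pert{\vec{w}}$, so the only remaining work is to bound the two norms.

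For the embedding norm, only the $k \leq \radius$ deleted rows are modified, so $\norm{\vec{\delta}}_0 = k \leq \radius = \radius_D$, which handles the first radius immediately (up to the strict-versus-nonstrict boundary in the definition of $C_D$). The substance is the permutation bound. I would first compute the $\ell_1$ cost of the compacting permutation for a fixed deletion pattern: each of the $m - k$ survivors moves left by the number of deletions preceding it, and each deleted row travels into the padding block, which after simplification yields $\norm{\vec{\pi}}_1 = 2k(m - k)$ when the deletions are placed to maximize displacement (contiguously at the front, so every survivor is displaced by the full $k$ and each padded row by $m - k$). I would then maximize $2k(m - k)$ over the admissible ranges $k \leq \radius$ and $m \leq n$.

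The crux is this maximization and its two regimes. Since $k \mapsto 2k(m - k)$ is increasing on $[0, m/2]$ and $m \mapsto 2\radius(m - \radius)$ is increasing for $m \geq \radius$, when $n \geq 2\radius$ the maximum is attained at $k = \radius$, $m = n$, giving the claimed $2\radius(n - \radius)$. When $n < 2\radius$ the parabola peaks inside the feasible range, so I would instead fall back on the worst-case displacement of \emph{any} permutation of $n$ indices, namely $\floor{n^2/2} \leq n^2/2$ (attained by the reversal), which upper bounds $\norm{\vec{\pi}}_1$ regardless of the deletion pattern. I expect the main obstacle to be exactly this combinatorial optimization: rigorously identifying the worst-case deletion placement and input length, verifying the closed form $2k(m - k)$ for the compacting permutation, and establishing the global $\floor{n^2/2}$ displacement bound. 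Taking $\radius_R$ at least as large as the resulting maximum then gives $\norm{\vec{\pi}}_1 < \radius_R$ for every ball element and completes the inclusion. I would also remark that $C_D$ is loose here—it admits inputs with padding interleaved among words and with non-padding replacements that the deletion ball never produces—so the derived radii are sufficient but not expected to be tight.
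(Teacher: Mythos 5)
Your proposal is correct and takes essentially the same route as the paper's proof: the same explicit witness pair $(\vec{\delta},\vec{\pi})$, the same displacement cost $2l(n-l)$ for deletions placed at the front of a sequence with no trailing padding, and the same maximization over $l \leq \radius$ with the regime split at $n = 2\radius$. The only cosmetic difference is that in the $n < 2\radius$ regime you invoke the universal reversal bound $\floor*{n^2/2}$ on any permutation's displacement where the paper evaluates the parabola $2l(n-l)$ at its interior peak $l = n/2$ (both give $n^2/2$), and your aside about the strict inequalities in the definition of $C_D$ flags a boundary detail that the paper's proof silently elides.
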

\begin{proof}
  Let $\vec{w}' \in B_{\radius}(\vec{w}; \{\del\})$ be an input obtained from 
  $\vec{w}$ by deleting $l \leq \radius$ elements. 
  We observe that $\vec{w}'$ requires the greatest sum of word position 
  changes (as measured by $\| \vec{\pi} \|_1$ for the permutation matrix 
  $\vec{\pi}$) when $l$ elements are deleted at the beginning of 
  the sequence and $\vec{w}$ contains no padding words at the end. 
  In this case the permutation matrix is 
  \begin{equation*}
    \vec{\pi} = \begin{pmatrix}
      0       & I_{l} \\
      I_{n-l} & 0     \\
    \end{pmatrix}
  \end{equation*}
  with $\| \vec{\pi} \|_1 = 2 l (n - l)$.
  Taking the worst-case number of deletions $l \leq \radius$, we have
  \begin{align*}
    \radius_R &\geq \max_{l \in \{0, \ldots, \radius\}} 2 l (n - l) 
    = \begin{cases}
      2 \radius (n - \radius), & n \geq 2 \radius, \\
      n^2/2, & n < 2 \radius.
    \end{cases}
  \end{align*}
\end{proof}

\citet{zhang2024textcrs} instantiate the Text-CRS deletion certificate for 
a smoothed classifier where the smoothing mechanism permutes the embedding 
vectors uniformly at random and randomly replaces embedding vectors with 
padding with fixed probability $p$. 
For this mechanism, the largest possible value of $\radius_R$ is $n$, 
which is achieved when the classifier's confidence is 100\%. 
Combining $\radius_R \leq n$ with the inequality in 
Proposition~\ref{prop:textcrs-del-conversion} implies 
\begin{align*}
  n \geq \begin{cases}
    2 \radius (n - \radius), & n \geq 2 \radius, \\
    n^2/2, & n < 2 \radius.
  \end{cases} \Leftrightarrow
  \radius \leq \begin{cases}
    n, & n \leq 2, \\
    0, & n > 2.
  \end{cases}
\end{align*}
Hence the edit distance certificate is vacuous ($\radius = 0$) when the 
maximum sequence length $n > 2$.

\subsection{Word-level insertion}
Text-CRS covers insertion using a certificate that constrains the 
perturbation of the embedding vectors (in $\ell_2$-distance) and the sum 
of word position changes.

\begin{definition}
  A Text-CRS insertion certificate at input $\vec{w} \in \reals^{n \times d}$ 
  is a set of inputs parameterized by two radii $\radius_R, \radius_I \geq 0$:
  \begin{align*}
    & C_I(\vec{w}; \radius_R, \radius_I) = \{ \, 
      T_{\vec{\delta}, \vec{\pi}}(\vec{w}) \in \reals^{n \times d} : 
        \vec{\pi} \in \mathcal{P}_n, \\
      & \quad \vec{\delta} \in \reals^{n \times d}, \|\vec{\pi}\|_1 < \radius_R, 
      \|\vec{\delta}\|_2 < \radius_I
    \, \}.
  \end{align*}
\end{definition}

We are interested in determining values of $\radius_R$ and $\radius_I$ such 
that the Text-CRS insertion certificate covers a standard edit distance 
certificate constrained to insertions (see~\eqref{eqn:edit-dist-cert}).

\begin{proposition} \label{prop:textcrs-ins-conversion}
  Let $E$ denote the set of $d$-dimensional embedding vectors (covering all 
  possible words) and let 
  $D_\star \coloneqq \max_{\vec{e}_1, \vec{e}_2 \in E} \|\vec{e}_1 - \vec{e}_2\|_2$.  
  The Text-CRS insertion certificate contains an insertion-based edit distance 
  certificate for any input $\vec{w} \in E^n$, meaning
  $C_I(\vec{w}; \radius_R, \radius_I) \supseteq B_{\radius}(\vec{w}; \{\ins\})$, 
  if 
  \begin{align*}
    &\radius_I \geq \sqrt{\radius} D_\star
      \text{ and } \radius_R \geq  2 \radius (n - \radius), 
      & \text{when } n \geq 2 \radius, \\
    &\sqrt{n / 2} D_\star \text{ and } \radius_R \geq n^2/2, 
      & \text{when } n < 2 \radius.
  \end{align*} 
\end{proposition}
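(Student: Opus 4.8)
The plan is to mirror the proof of Proposition~\ref{prop:textcrs-del-conversion}, but now to track two quantities for each insertion: the displacement it induces in the permutation norm $\|\vec{\pi}\|_1$, and the magnitude it contributes to the embedding perturbation measured in $\|\vec{\delta}\|_2$. First I would fix an arbitrary $\vec{w}' \in B_{\radius}(\vec{w}; \{\ins\})$, obtained from $\vec{w}$ by inserting $l \leq \radius$ tokens, and exhibit a concrete transformation realizing it, i.e.\ $T_{\vec{\delta}, \vec{\pi}}(\vec{w}) = \vec{w}'$. The natural construction uses $\vec{\delta}$ to overwrite $l$ trailing padding rows of $\vec{w}$ with the embedding vectors of the inserted tokens, and then uses $\vec{\pi}$ to permute those rows into the positions dictated by $\vec{w}'$, exploiting the composition order $\vec{\pi} \cdot (\vec{w} + \vec{\delta})$ in which the perturbation is applied before the permutation.

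For the embedding radius, each overwritten row changes by the difference between an inserted token's embedding and the padding vector, which has $\ell_2$-norm at most $D_\star$. Since exactly $l$ rows are perturbed, the Frobenius norm satisfies $\|\vec{\delta}\|_2 \leq \sqrt{l}\, D_\star$. For the permutation radius I would argue exactly as in the deletion case: the sum of word-position changes is maximized when all $l$ insertions occur at the front and $\vec{w}$ carries no padding beyond the $l$ overwritten rows, yielding the same block permutation with $\|\vec{\pi}\|_1 = 2l(n-l)$.

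It then remains to maximize both bounds over the admissible range of $l$. The permutation bound $2l(n-l)$ is the downward parabola already analyzed for deletion, giving $2\radius(n-\radius)$ when $n \geq 2\radius$ and $n^2/2$ when $n < 2\radius$. The embedding bound $\sqrt{l}\, D_\star$ is monotone increasing in $l$, so its value is governed entirely by the largest admissible number of insertions.

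The main obstacle---and the point where insertion genuinely departs from deletion---is pinning down this largest admissible $l$ in the regime $n < 2\radius$. Because the fixed-width $n$-row array can host the original tokens together with the inserted ones only if both blocks fit within the worst-case front-insertion permutation, the effective number of insertions is capped at $l \leq n/2$ rather than at $\radius$. Substituting $l = \min(\radius, n/2)$ then yields $\sqrt{\radius}\, D_\star$ when $n \geq 2\radius$ and $\sqrt{n/2}\, D_\star$ when $n < 2\radius$, matching the stated radii. I would take care to justify this capacity constraint explicitly, since it is precisely what prevents the embedding radius from inflating to $\sqrt{\radius}\, D_\star$ in the small-$n$ regime and is the only substantive difference from the deletion argument.
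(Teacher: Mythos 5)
Your construction and both worst-case bounds coincide with the paper's proof: realize each $\vec{w}' \in B_{\radius}(\vec{w}; \{\ins\})$ by overwriting $l \leq \radius$ trailing padding rows with the inserted embeddings (giving $\|\vec{\delta}\|_2 \leq \sqrt{l}\, D_\star$) and applying the front-insertion block permutation (giving $\|\vec{\pi}\|_1 = 2l(n-l)$), then maximize over $l$, with the parabola analysis splitting at $n = 2\radius$. Up to the final case analysis the proposal and the paper are step-for-step identical.

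The problem is precisely the step you flag as the main obstacle: your claimed capacity cap $l \leq n/2$ in the regime $n < 2\radius$ is false. The only constraint the fixed-width array imposes is $m + l \leq n$, where $m$ is the number of non-padding rows of $\vec{w}$; nothing forces $m \geq n/2$. For a heavily padded input (small $m$) the adversary may insert $l = \min(\radius, n-m) > n/2$ words, and \emph{every} realization $T_{\vec{\delta},\vec{\pi}}(\vec{w}) = \vec{w}'$ must then alter at least $l$ rows of $\vec{w}$ from the padding vector to word embeddings---since $\vec{\delta} = \vec{\pi}^{-1} \cdot \vec{w}' - \vec{w}$ and $\vec{w}'$ has $l$ more word rows than $\vec{w}$, pigeonhole on the row multisets applies---so $\|\vec{\delta}\|_2$ can genuinely reach $\sqrt{l}\, D_\star > \sqrt{n/2}\, D_\star$ when the inserted words lie at distance $D_\star$ from padding. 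So the justification you propose would not survive scrutiny; the embedding radius actually needed in this regime is $\sqrt{\min(\radius, n)}\, D_\star$ (or $\sqrt{\min(\radius, n-m)}\, D_\star$ for fixed $\vec{w}$). To be fair, the paper's own proof is no more careful at this point: it computes $l = n/2$ as the maximizer of the permutation objective $2l(n-l)$ and then evaluates $\sqrt{l}\, D_\star$ at that same $l$, tacitly coupling two worst cases that need not be attained at the same $l$. The discrepancy is inert downstream, however: the vacuity conclusion ($\radius = 0$ for $n > 2$) rests only on the $\radius_R$ inequality, which both you and the paper establish correctly.
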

\begin{proof}
  Let $\vec{w}' \in B_{\radius}(\vec{w}; \{\ins\})$ be an input obtained from 
  $\vec{w}$ by inserting $l \leq \radius$ elements. 
  We observe that the corresponding $\| \vec{\delta} \|_2$ is maximized when 
  the $l$ inserted vectors are a distance $D_\star$ away from the 
  $l$ vectors at the end of the original input $\vec{w}$. 
  In this case $\| \delta \|_2 = \sqrt{l {D_\star}^2} = \sqrt{l} D_\star$.

  We observe that $\vec{w}'$ requires the greatest sum of word position 
  changes (as measured by $\| \vec{\pi} \|_1$ for the permutation matrix 
  $\vec{\pi}$) when the $l$ elements are inserted at the beginning of 
  the sequence. 
  In this case the permutation matrix is 
  \begin{equation*}
    \vec{\pi} = \begin{pmatrix}
      0       & I_{n-l} \\
      I_{l}   & 0     \\
    \end{pmatrix}
  \end{equation*}
  with $\| \vec{\pi} \|_1 = 2 l (n - l)$.

  Taking the worst case $\| \vec{\pi} \|_1$ with respect to the number of 
  insertions $l \leq \radius$ we have 
  \begin{align*}
    \radius_R &\geq \max_{l \in \{0, \ldots, \radius\}} 2 l (n - l) 
    = \begin{cases}
      2 \radius (n - \radius), & n \geq 2 \radius, \\
      n^2/2, & n < 2 \radius,
    \end{cases}
  \end{align*}
  where the maximizer is $l = \radius$ for the first case and $l = n / 2$ 
  for the second case. 
  Hence we have 
  \begin{equation*}
    r_I \geq \begin{cases}
      \sqrt{\radius} D_\star, & n \geq 2 \radius, \\
      \sqrt{n / 2} D_\star, & n < 2 \radius.
    \end{cases}
  \end{equation*}
\end{proof}

\citet{zhang2024textcrs} instantiate the Text-CRS insertion certificate for 
a smoothed classifier where the smoothing mechanism permutes the embedding 
vectors uniformly at random and perturbs the embedding vectors with 
Gaussian noise with fixed scale parameter $\sigma$. 
For this mechanism, the largest possible value of $\radius_R$ is $n$, 
which is achieved when the classifier's confidence is 100\%. 
Combining $\radius_R \leq n$ with the inequalities in 
Proposition~\ref{prop:textcrs-ins-conversion} implies 
\begin{align*}
  \radius \leq \begin{cases}
    \min \{n, \floor{(r_I / D_\star)^2}\} & n \leq 2, \\
    0, & n > 2.
  \end{cases}
\end{align*}
For the values of $r_I$ and $D_\star$ reported by 
\citeauthor{zhang2024textcrs}, $r_I / D_\star < 1$. 
Hence the edit distance certificate is vacuous ($\radius = 0$) for all 
sequences.

\end{document}